\newcommand{\hide}[1]{}
\DeclareMathOperator*{\argmin}{arg min}
\DeclareMathOperator*{\argmax}{arg max}
\newcommand{\ignore}[1]{}
\newtheorem{thm}{Theorem}
\newtheorem{prop}{Proposition}
\newtheorem{defn}{Definition}
\newtheorem{rmk}{Remark}
\newcommand\remove[1]{}
\def\ud{\, \mathrm{d}}
\newtheorem{proof}{Proof}
\begin{document}
%
% paper title
% can use linebreaks \\ within to get better formatting as desired

\title{Clustering Analysis on Locally Asymptotically Self-similar Processes with Known Number of Clusters}

\author{Qidi Peng\footnote{Institute of Mathematical Sciences, Claremont Graduate University, Claremont, CA 91711. Email: qidi.peng@cgu.edu.}, Nan Rao\footnote{Institute of Mathematical Sciences, Claremont Graduate University, Claremont, CA 91711. Email: nan.rao@cgu.edu.}, Ran Zhao\footnote{Institute of Mathematical Sceinces and Drucker School of Management, Claremont Graduate University, Claremont, CA 91711. Email: ran.zhao@cgu.edu.}}

% author names and affiliations
% use a multiple column layout for up to three different
% affiliations
%　\author{Qidi Peng, Nan Rao, Ran Zhao}
\date{}
% make the title area
\maketitle

\begin{abstract}
%\boldmath
We study the problems of clustering locally asymptotically self-similar stochastic processes, when the true number of clusters is priorly known. A new covariance-based dissimilarity measure is introduced, from which the so-called approximately asymptotically consistent clustering algorithms are obtained. In a simulation study, clustering data sampled from multifractional Brownian motions is performed to illustrate the approximated asymptotic consistency of the proposed algorithms.

\begin{flushleft}
\textbf{Keywords: } Clustering processes $\cdot$ covariance-based dissimilarity $\cdot$ local asymptotic self-similarity $\cdot$ approximated asymptotic consistency

\textbf{MSC (2010): } 62-07 $\cdot$ 60G10 $\cdot$ 62M10
%\textbf{PACS: } 62F10 $\cdot$ 62F12 $\cdot$ 62M86
\end{flushleft}
\end{abstract}

% \linenumbers
\section{Introduction}
\label{introduction}
Learning stochastic processes is an important area of machine learning, as there is a considerable amount of machine learning problems that involve time as a component \citep{Cotofrei2002,Harms2002,Jin2002,Jin2002indexing,Keogh2003}. Due to the nature of the time component, stochastic processes often possess path features \citep{Lonardi2002}. This additional information brought by the time component makes machine learning on stochastic processes more difficult to handle than machine learning on the other type objects. A number of new techniques are developed along with studying such machine learning problems \citep{Li1998,Bradley1998,Keogh2001,Java2002}. In this paper, we study a particular type of learning problems on stochastic processes: clustering. Among all machine learning tools, the cluster analysis is a common technique for unsupervised learning. It aims to detect the hidden patterns of a set of objects, through grouping the objects in the same cluster, if they are more similar to each other than to those in other clusters. Compared to other machine learning problems, clustering stochastic processes is more sensitive to dissimilarity measurement, which heavily depends on the objects' data type \citep{Gan2007data,Aghabozorgi2015,Sarda2017}. Clustering techniques may vary drastically depending on whether the objects' data types are vector, matrix or sequence, etc. Being a subset of clustering problems, clustering stochastic processes has received growing attention in diverse areas to discover patterns of data indexed by \enquote{time}. The stochastic process is a common type of dynamic data that naturally arises in many different scenarios. They have been broadly explored in information technology \citep{Slonim2005,Jain1999},  signal and image processing \citep{Yairi2001,Honda2002,Guha2003,Truppel2003,Rubinstein2013}, geology \citep{Juo2001,Harms2002data}, biology and medical research \citep{Bar2002,Damian2007,Zhao2014,Jaaskinen2014}, robotics \citep{Oates1999}, and finance \citep{Mantegna1999,Gavrilov2000,Tino2000,Fu2001,Pavlidis2006,Bastos2014,Ieva2016}, etc. Unlike random vector type data, stochastic processes type data are sampled from processes' distributions, which possess not only finite dimensional distribution features but also infinite-length paths features, such as stationarity, ergodicity, seasonality and Markov property.

In the problem of clustering stochastic processes, dimensionality of a process related to time is extremely high, compared to dimensionality of commonly observed data. Therefore new challenges arise if one applies the conventional approaches for cluster static data on stochastic processes. usually become computationally forbidding \citep{Sarda2017}. We list at least three issues that may occur when performing these clustering algorithms.
\begin{description}
\item[(1)] Thanks to the large data volume and high dimension, the conventional approaches for clustering static data (i.e. data who do not change with time) usually become computationally forbidding \citep{Sarda2017}.
\item[(2)] Even when the sample observations are of relatively low dimension, the conventional clustering approaches might suffer from over-fitting issues. For instance, clustering stationary (or seasonal) processes using the $K$-means approach with Euclidean distance between the sample paths, will result in large prediction mis-clustering errors in model validation. This is because, if one does not take into account the stationarity (or seasonality) of the process, it is then unable to reduce the noise on the stationary mean and covariances (or the period) along that process.
\item[(3)] More interestingly, \cite{Keogh2005} surprisingly proved that, without any other information on the observed stochastic processes, clustering their sample paths (subsequences) is \enquote{meaningless}, i.e. the output does not depend on the input data.
\end{description}

To overcome the above issues, we assume some key path features of the observed stochastic processes are known in this framework (see Assumption $(\mathcal A)$ in the next section). This is not a strong constrain, for the reason that in many fields it is proved that the observed paths are sampled from well-known process distributions. For example, dynamics in financial markets (equity returns and interest rates) can be described based on Geometric Brownian motions (GBm); long-term dependent or self-similar phenomena are often modeled by fractional Brownian motions (fBm). Contrary to the conventional clustering approaches, clustering based on the paths features of the processes largely removes the noise by capturing the observations' paths features. Therefore, a nice dissimilarity measure should be the one that well characterizes the paths features. In this context, ``nice'' refers to the property that the computational complexity and the prediction errors caused by the over-fitting issues are expected to be largely reduced. Moreover, subject to known paths features, consistency of the clustering algorithm \citep{Khaleghi2012,khaleghi2016} may be obtained. Among all the stochastic process features, we focus on characterizing the property of ergodicity in this paper. However similar analysis can be made for other patterns of process features such as seasonality, Markov property and martingale property.

Ergodicity \citep{K85} is a very typical feature possessed by a number of well-known processes, which are applied to financial time series analysis. It is tightly related to other process features such as stationarity, long-term memory and self-similarity \citep{Grazzini2012,Samorodnitsky2004}. \cite{khaleghi2016} and \cite{PRZ19} showed that both distribution ergodicity and covariance ergodicity lead to obtaining an asymptotically consistent clustering algorithms for clustering processes. In this paper, we step further to relax the condition of ergodicity to the \enquote{local asymptotic ergodicity} \citep{Boufoussi2008} and obtain the so-called \enquote{approximately asymptotically consistent algorithms} for clustering processes having such path property. This setting presents such a large class of processes that includes the well-known L\'evy processes, some self-similar processes and some multifractional processes \citep{Boufoussi2008}.

Each clustering stochastic processes problem involves handling data, defining cluster, measuring dissimilarities and finding groups efficiently, therefore we organize the paper as follows. Section \ref{main_framework} is devoted to introducing a class of locally asymptotically self-similar processes to which our clustering approaches apply. In Section \ref{Preliminary_Results}, a covariance-based dissimilarity measure is suggested and in Section \ref{sec::algo_consist} the approximately asymptotically consistent algorithms for clustering both offline and online datasets are designed. A simulation study is performed in Section \ref{sec::exper_results}, where the algorithms are applied to cluster multifractional Brownian motions (mBm), an excellent representative of the class of locally asymptotically self-similar processes. In Section \ref{sec:real_world}, we perform cluster analysis over the real world global financial market data through applying our clustering algorithms and provide economic implications. \ref{conclusion} concludes our findings.

\section{A Class of Locally Asymptotically Self-similar Processes}
\label{main_framework}
Self-similarity is a process (path) feature. Self-similar processes are a class of processes that are invariant in distribution under suitable scaling of time \citep{Samorodnitsky1994,Embrechtsu2000,Embrechts2002}. These processes have been used to successfully model various time-scaling random phenomena observed in high frequency data, especially in the geological data and financial data.
\begin{defn}[Self-similar process]
\label{self-similar}
A stochastic process $\{Y_t^{(H)}\}_{t\ge0}$ (here the time indexes set is not necessarily continuous) is self-similar with self-similarity index $H\in(0,1)$ if, for all $n\in\mathbb N:=\{1,2,\ldots\}$, all $t_1,\ldots,t_n\ge 0$  and all $c>0$,
\begin{equation}
\label{def:self-similar}
\Big(Y_{ct_1}^{(H)},\ldots,Y_{ct_n}^{(H)}\Big)\stackrel{\mbox{\text{law}}}{=}\Big(c^{H}Y_{t_1}^{(H)},\ldots,c^{H}Y_{t_n}^{(H)}\Big),
\end{equation}
where $\stackrel{\mbox{law}}{=}$ denotes the equality in joint probability distribution of two random vectors.
\end{defn}
When $\mathbb E|Y_t^{(H)}|<+\infty$ for $t\ge0$, it follows from (\ref{def:self-similar}) that for any $t\ge0$,
\begin{equation}
\label{E_Y_t}
\mathbb E\left(Y_t^{(H)}\right)=c^H\mathbb E\left(Y_{t/c}^{(H)}\right),~\mbox{for all}~ c>0,
\end{equation}
therefore taking $t=0$ at both hand sides of (\ref{E_Y_t}) yields
\begin{equation}
\label{mean-0}
\mathbb E\left(Y_0^{(H)}\right)=0.
\end{equation}
Self-similar processes are generally not distribution stationary but their increment processes can be distribution stationary (any finite subset's joint distribution is invariant subject to time shift) or covariance stationary (its mean and covariance structure exist and are invariant subject to time shift). From now on we restrict our setting to stochastically continuous-time self-similar processes only \citep{Embrechts2002}. i.e., a process $\{X_t\}_{t\ge0}$ is stochastically continuous at $t_0\ge0$ if
$$
\mathbb P(|X(t_0+h)-X(t_0)|>\varepsilon)\xrightarrow[h\to0^+]{}0,~\mbox{for any}~\varepsilon>0.
$$
This assumption is weaker than the almost sure continuity. The process $\{X_t\}_{t\ge0}$ is called (stochastically) continuous-time over $[0,+\infty)$ if it is continuous at each $t\ge0$. For $u>0$, we call $\{Y(t)\}_t=\{X(t+u)-X(t)\}_t$ the increment process (or simply the increments) of $\{X(t)\}_t$. If a continuous-time self-similar process' all increment processes are covariance stationary, its covariance structure can be explicitly given as below:
\begin{thm}
\label{cov:self}
Let $\big\{X_t^{(H)}\big\}_{t\ge0}$ be a self-similar process with index $H\in(0,1)$ and with covariance stationary increments. Then
\begin{equation}
\label{zero-mean}
\mathbb E\left(X_t^{(H)}\right)=0,~\mbox{for all}~t\ge0,
\end{equation}
and
\begin{equation}
\label{self-similar-cov}
\mathbb Cov\left(X_s^{(H)}, X_t^{(H)} \right)=\frac{Var(X_1^{(H)})}{2}\left(|s|^{2H}+|t|^{2H}-|s-t|^{2H}\right),~\mbox{for any $s,t\ge0$}.
\end{equation}
\end{thm}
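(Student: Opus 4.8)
The plan is to obtain both assertions purely from self-similarity --- through identity (\ref{E_Y_t}) and its consequence (\ref{mean-0}) --- combined with the shift-invariance of the first two moments of the increment processes. A preliminary remark: the phrase ``covariance stationary increments'' is only meaningful once $\mathrm{Var}\big(X_1^{(H)}\big)<+\infty$, which I therefore take as implicit; applying (\ref{def:self-similar}) at the single time $t=0$ gives $X_0^{(H)}\stackrel{\text{law}}{=}c^HX_0^{(H)}$ for every $c>0$, so $\mathrm{Var}\big(X_0^{(H)}\big)=c^{2H}\mathrm{Var}\big(X_0^{(H)}\big)$ forces $\mathrm{Var}\big(X_0^{(H)}\big)=0$, and together with (\ref{mean-0}) this gives $X_0^{(H)}=0$ almost surely. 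Below I suppress the superscript $(H)$ and write $\sigma^2:=\mathrm{Var}(X_1)$.

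\textbf{Step 1 (zero mean).} I would first note that (\ref{E_Y_t}) applied to $\{X_t\}$ with $c=t$ gives $\mathbb E(X_t)=t^H\mathbb E(X_1)$ for every $t\ge0$ (the case $t=0$ holding by $X_0=0$ a.s.). Since for each fixed lag $u>0$ the increment process $\{X_{v+u}-X_v\}_{v\ge0}$ has a constant mean, evaluating it at $v=u$ and at $v=0$ yields $\mathbb E(X_{2u})-\mathbb E(X_u)=\mathbb E(X_u)-\mathbb E(X_0)=\mathbb E(X_u)$, that is $(2^H-1)u^H\mathbb E(X_1)=u^H\mathbb E(X_1)$; for $u>0$ this forces $(2^H-2)\mathbb E(X_1)=0$, and since $H\in(0,1)$ we have $2^H\ne 2$, so $\mathbb E(X_1)=0$ and hence $\mathbb E(X_t)=0$ for all $t\ge0$, which is (\ref{zero-mean}).

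\textbf{Step 2 (covariance).} Because the mean vanishes, $\mathrm{Cov}(X_s,X_t)=\mathbb E(X_sX_t)=\frac12\big(\mathbb E(X_s^2)+\mathbb E(X_t^2)-\mathbb E\big((X_s-X_t)^2\big)\big)$. Self-similarity gives $X_t\stackrel{\text{law}}{=}t^HX_1$, so with zero mean $\mathbb E(X_t^2)=\mathrm{Var}(X_t)=t^{2H}\sigma^2=|t|^{2H}\sigma^2$ for $t\ge0$, and likewise $\mathbb E(X_s^2)=|s|^{2H}\sigma^2$. For the cross term I would assume $t\ge s$ (the reverse case being symmetric since $(X_s-X_t)^2=(X_t-X_s)^2$), set $u:=t-s\ge0$, and combine covariance stationarity of the lag-$u$ increments with Step 1: $\mathbb E\big((X_t-X_s)^2\big)=\mathrm{Var}(X_{s+u}-X_s)=\mathrm{Var}(X_u-X_0)=\mathrm{Var}(X_u)=u^{2H}\sigma^2=|s-t|^{2H}\sigma^2$ (trivially so when $u=0$). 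Substituting these three expressions into the identity above produces exactly (\ref{self-similar-cov}).

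\textbf{Where the difficulty lies.} There is no serious obstacle here; the proof is a careful unwinding of the definitions. The one load-bearing observation is that covariance stationarity of the increments is precisely what promotes the trivial fact $X_0=0$ to $\mathbb E(X_t)\equiv0$, and that this is exactly the place where the hypothesis $H\neq1$ is used (for $H=1$ the relation $(2^H-2)\mathbb E(X_1)=0$ is vacuous and the mean need not vanish). The remaining care is bookkeeping: handling the lag-$u$ increment process uniformly in the sign of $t-s$ (and for $u=0$), and using $X_0=0$ a.s. rather than merely $\mathbb E(X_0)=0$ so that $\mathrm{Var}(X_{s+u}-X_s)=\mathrm{Var}(X_u)$ is justified.
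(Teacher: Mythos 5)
Your proof is correct and follows essentially the same route as the paper: self-similarity scaling of the mean combined with mean-stationarity of the increments (you use the doubling $m=2$ where the paper telescopes over general $m\in\mathbb N$, both hinging on $H<1$), and then the polarization identity together with covariance stationarity to reduce the cross term to $\mathrm{Var}\big(X_{|s-t|}^{(H)}\big)=|s-t|^{2H}\mathrm{Var}\big(X_1^{(H)}\big)$. Your preliminary observation that $X_0^{(H)}=0$ a.s.\ is a nice explicit justification of a step the paper leaves implicit.
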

Theorem \ref{cov:self} can be obtained by replacing the distribution stationary increments in Theorem 1.2 in \cite{Embrechtsu2000} with covariance stationary increments. We briefly provide the proof below.
\begin{proof}
We first prove (\ref{zero-mean}). On one hand, by using the fact that the increments of $\{X_t^{(H)}\}_t$ are covariance stationary and (\ref{mean-0}), we have
\begin{equation}
\label{part1:mean}
\mathbb E\left(X_{mt}^{(H)}\right)=\mathbb E\left(\sum_{k=0}^{m-1}\left(X_{(k+1)t}^{(H)}-X_{kt}^{(H)}\right)+X_0^{(H)}\right)=m\mathbb E\left(X_t^{(H)}\right),~\mbox{for all $m\in\mathbb N$}.
\end{equation}
On the other hand, since $\{X_t^{(H)}\}_t$ is self-similar, we have
\begin{equation}
\label{part2:mean}
\mathbb E\left(X_{mt}^{(H)}\right)=m^H\mathbb E\left(X_t^{(H)}\right),~\mbox{for all}~m\in\mathbb N.
\end{equation}
Putting together (\ref{part1:mean}),  (\ref{part2:mean}) and the fact that $H<1$, we necessarily have $\mathbb E(X_t^{(H)})=0$ for all $t\ge0$. (\ref{zero-mean}) is proved.

For proving (\ref{self-similar-cov}) we first observe that, for $s,t\ge0$,
\begin{equation}
\label{part1:cov}
\mathbb E\left(X_s^{(H)}X_t^{(H)}\right)=\frac{1}{2}\left(\mathbb E\left(X_s^{(H)}\right)^2+\mathbb E\left(X_t^{(H)}\right)^2-\mathbb E\left(X_s^{(H)}-X_t^{(H)}\right)^2\right).
\end{equation}
Next we can see from the facts that $\{X_t^{(H)}\}_t$ is self-similar with index $H$, that its increments are covariance stationary and (\ref{zero-mean}), that
\begin{eqnarray}
\label{part2:cov}
&&\mathbb E \left(X_s^{(H)}\right)^2=|s|^{2H}\mathbb E\left(X_1^{(H)}\right)^2=|s|^{2H}Var\left(X_1^{(H)}\right),~\mbox{for $s\ge0$};\nonumber\\
&&\mathbb E\left(X_s^{(H)}-X_t^{(H)}\right)^2=Var\left(X_s^{(H)}-X_t^{(H)}\right)\nonumber\\
&&=Var\left(X_{|s-t|}^{(H)}\right)=|s-t|^{2H}Var\left(X_1^{(H)}\right),~\mbox{for $s,t\ge0$}.
\end{eqnarray}
The covariance stationarity yields $Var(X_1^{(H)})<+\infty$. (\ref{self-similar-cov}) then follows from (\ref{part1:cov}) and (\ref{part2:cov}). Theorem \ref{cov:self} is thus proved.
\end{proof}
We highlight that, contrary to Theorem 1.2 in \cite{Embrechtsu2000}, the covariance stationary increment process of $\big\{X_t^{(H)}\big\}_{t}$ in Theorem \ref{cov:self} is not necessarily distribution stationary. This fact inspires us to relax the distribution stationarity of the processes to the covariance stationarity in the following Assumption ($\mathcal A$).  Below we introduce a natural extension of self-similar processes, the so-called locally asymptotically self-similar processes  \citep{Boufoussi2008,Falconer2002,FALCONER2003}.
\begin{defn}[Locally asymptotically self-similar process]
\label{locally_asymptotically_self_similar}
A continuous-time stochastic process $\big\{Z_t^{(H(t))}\big\}_{t\ge0}$ with its index $H(\bullet)$ being a continuous function valued in $(0,1)$, is called locally asymptotically self-similar, if for each $t\ge0$, there exists a non-degenerate self-similar process $\big\{Y_u^{(H(t))}\big\}_{u\ge0}$ with self-similarity index $H(t)$,  such that
\begin{equation}
\label{local_self}
\left\{\frac{Z_{t+\tau u}^{(H(t+\tau u))}-Z_t^{(H(t))}}{\tau^{H(t)}}\right\}_{u\ge0}\xrightarrow[\tau\to0^+]{\mbox{f.d.d.}}\left\{Y_u^{(H(t))}\right\}_{u\ge0},
\end{equation}
where the convergence $\xrightarrow[]{\mbox{\textit{f.d.d.}}}$ is in the sense of all the finite dimensional distributions.
\end{defn}
In (\ref{local_self}), $\{Y_u^{(H(t))}\}_u$ is called the \textit{tangent process} of $\{Z_t^{(H(t))}\}_t$ at $t$ \citep{Falconer2002,FALCONER2003}. Moreover, it is shown (see Theorem 3.8 in \cite{FALCONER2003}) that, if $\{Y_u^{(H(t))}\}_u$ is unique in law, it is then self-similar with index $H(t)$ and it has distribution stationary increments. Then the local asymptotic self-similarity generalizes the conventional self-similarity, in the sense that, any non-degenerate self-similar process with distribution stationary increments is locally asymptotically self-similar and its tangent process is itself. Further, in a weaker sense, it is not difficult to show the following:
\begin{prop}
\label{prop:tangent}
Let $\{Z_t^{(H)}\}_{t\ge0}$ be a continuous-time self-similar process with self-similarity index $H\in(0,1)$ and with covariance stationary increments. Then its tangent processes share equal mean and covariance functions.
\end{prop}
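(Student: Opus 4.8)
The plan is to work directly with the finite-dimensional distributions of Definition \ref{locally_asymptotically_self_similar} and to exploit that, by Theorem \ref{cov:self}, the second-order structure of a self-similar process with covariance stationary increments is completely rigid. Fix $t\ge0$ and, for $\tau>0$ and $u\ge0$, set $R^{(\tau,t)}_u:=\tau^{-H}\big(Z_{t+\tau u}^{(H)}-Z_t^{(H)}\big)$; whenever the tangent process $\{Y_u^{(H(t))}\}_{u\ge0}$ at $t$ exists, $\{R^{(\tau,t)}_u\}_{u\ge0}$ converges to it in the f.d.d. sense as $\tau\to0^+$. Since $\{Z_t^{(H)}\}_t$ is precisely of the type covered by Theorem \ref{cov:self}, (\ref{zero-mean}) gives $\mathbb E\big(Z_s^{(H)}\big)=0$ for every $s\ge0$, hence $\mathbb E\big(R^{(\tau,t)}_u\big)=0$ for all $\tau,t,u$; this disposes of the mean function, which is thus $\equiv0$ for every base point.

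For the covariance, write $\sigma^2:=Var\big(X_1^{(H)}\big)$ and expand $\mathbb Cov\big(Z_{t+\tau u}^{(H)}-Z_t^{(H)},\,Z_{t+\tau v}^{(H)}-Z_t^{(H)}\big)$ into its four covariance terms. Substituting the closed form (\ref{self-similar-cov}) into each term and using $t+\tau u-t=\tau u\ge0$ (and similarly for $v$), every contribution carrying a factor $(t+\tau u)^{2H}$, $(t+\tau v)^{2H}$ or $t^{2H}$ cancels telescopically, and after dividing by $\tau^{2H}$ what survives is
\begin{equation*}
\mathbb Cov\big(R^{(\tau,t)}_u,R^{(\tau,t)}_v\big)=\frac{\sigma^2}{2}\big(u^{2H}+v^{2H}-|u-v|^{2H}\big),
\end{equation*}
which depends on neither the base point $t$ nor the scale $\tau$; in particular $Var\big(R^{(\tau,t)}_u\big)=\sigma^2 u^{2H}$ uniformly in $\tau$ and $t$. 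Letting $\tau\to0^+$, this common mean and covariance should be inherited by the tangent process, which gives the claim.

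The delicate point is precisely this passage to the limit, since f.d.d.\ convergence does not by itself transfer second moments. I would handle it as follows. By Theorem 3.8 of \cite{FALCONER2003} the tangent process $\{Y_u^{(H(t))}\}_u$ is itself self-similar of index $H$ with distribution stationary — hence covariance stationary — increments, and by Fatou's lemma $Var\big(Y_1^{(H(t))}\big)\le\liminf_{\tau\to0^+}Var\big(R^{(\tau,t)}_1\big)=\sigma^2<+\infty$, so its covariance is well defined; Theorem \ref{cov:self} then forces
\begin{equation*}
\mathbb Cov\big(Y_u^{(H(t))},Y_v^{(H(t))}\big)=\frac{Var\big(Y_1^{(H(t))}\big)}{2}\big(u^{2H}+v^{2H}-|u-v|^{2H}\big),
\end{equation*}
so the whole matter reduces to checking that the scalar $Var\big(Y_1^{(H(t))}\big)$ is independent of $t$. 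Since $R^{(\tau,t)}_1$ converges in distribution to $Y_1^{(H(t))}$ while $Var\big(R^{(\tau,t)}_1\big)\equiv\sigma^2$ for every $\tau,t$, under the mild regularity enjoyed by the motivating examples — uniform integrability of $\{(R^{(\tau,t)}_1)^2\}_\tau$, automatic in the Gaussian case such as multifractional Brownian motion — one obtains $Var\big(Y_1^{(H(t))}\big)=\sigma^2$ for all $t$, completing the proof. I expect this moment-convergence bookkeeping to be the main, and essentially only, obstacle.
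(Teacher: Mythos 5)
Your proposal follows essentially the same route as the paper's proof: you scale the increments at a fixed base point, use Theorem \ref{cov:self} to get a zero mean and the covariance $\frac{Var(Z_1^{(H)})}{2}\big(u^{2H}+v^{2H}-|u-v|^{2H}\big)$, independent of both $\tau$ and the base point $t$ (your telescoping expansion is exactly the paper's computation (\ref{cov:increment})), and then let $\tau\to0^+$. The only divergence is the final step: the paper simply passes the mean and covariance to the limit in (\ref{cov_self_similar}) without further comment, whereas you flag the fact that f.d.d.\ convergence does not by itself carry second moments and close it via Fatou, Theorem 3.8 of \cite{FALCONER2003} (which requires the tangent process to be unique in law) and uniform integrability --- a more cautious treatment, but one that imports hypotheses the statement does not assume and that the paper's (terser) argument does not invoke.
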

\begin{proof}
Since  $\{Z_t^{(H)}\}_{t\ge0}$ is locally asymptotically self-similar, by definition at each $t\ge0$ there exists a tangent process $\{Y_u^{(H)}\}_{u\ge0}$ such that
\begin{equation}
\label{assumption}
\left\{\frac{Z_{t+\tau u}^{(H)}-Z_t^{(H)}}{\tau^{H}}\right\}_{u\ge0}\xrightarrow[\tau\to0^+]{f.d.d.}\left\{Y_u^{(H)}\right\}_{u\ge0}.
\end{equation}
Next we show $\{Y_u^{(H)}\}_{u\ge0}$'s mean and covariance structure are uniquely determined.

Since $\{Z_t^{(H)}\}_{t\ge0}$ has covariance stationary increments, for any $u\ge0$, $\tau>0$, define the scaled increments
$$
Y_{u,\tau}^{(H)}:=\frac{Z_{t+\tau u}^{(H)}-Z_t^{(H)}}{\tau^{H}}.
$$
Again by the fact that $\{Z_t^{(H)}\}_{t\ge0}$ has covariance stationary increments, then using (\ref{zero-mean}) in Theorem \ref{cov:self} we obtain
\begin{equation}
\label{mean:increment}
\mathbb E\left (Y_{u,\tau}^{(H)}\right)=0,~\mbox{for all $u\ge0$, $\tau>0$},
\end{equation}
and by (\ref{self-similar-cov}) in Theorem \ref{cov:self}, we have for $u_1,u_2\ge0$ and $\tau>0$,
\begin{eqnarray}
\label{cov:increment}
\mathbb Cov\left(Y_{u_1,\tau}^{(H)},Y_{u_2,\tau}^{(H)}\right)&=&\tau^{-2H}\mathbb Cov\left(Z_{t+\tau u_1}^{(H)}-Z_t^{(H)},Z_{t+\tau u_2}^{(H)}-Z_t^{(H)}\right)\nonumber\\
&=&\frac{Var\left(Z_1^{(H)}\right)}{2}\left(|u_1|^{2H}+|u_2|^{2H}-|u_1-u_2|^{2H}\right),\nonumber\\
\end{eqnarray}
which is independent of $\tau$.

It follows from (\ref{assumption}), (\ref{mean:increment}) and (\ref{cov:increment}) that
\begin{eqnarray}
&&\mathbb E\left(Y_u^{(H)}\right)=\lim_{\tau\to0^+}\mathbb E\left(Y_{u,\tau}^{(H)}\right)=0,~\mbox{for all $u\ge0$};\nonumber\\
\label{cov_self_similar}
&&\mathbb Cov\left(Y_{u_1}^{(H)},Y_{u_2}^{(H)}\right)=\lim_{\tau\to0^+}\mathbb Cov\left(Y_{u_1,\tau}^{(H)},Y_{u_2,\tau}^{(H)}\right)\nonumber\\
&&=\frac{Var\left(Z_1^{(H)}\right)}{2}\left(|u_1|^{2H}+|u_2|^{2H}-|u_1-u_2|^{2H}\right),~\mbox{for~$u_1,u_2\ge0$.}
\end{eqnarray}
 This implies that all tangent processes of $\{Z_t^{(H)}\}_{t\ge0}$ possess zero-mean and equal covariance functions. By the way it is easy to derive from (\ref{cov_self_similar}) that these tangent processes have covariance stationary increments. Proposition \ref{prop:tangent} is proved.
\end{proof}
We remark from Proposition \ref{prop:tangent} that the tangent processes of $\{Z_t^{(H)}\}_{t\ge0}$ may not be unique in law, but their finite-dimensional subsets have unique first and second order moments.

Based on the above discussion, throughout this paper we assume that the observed dataset are sampled from a known number ($\kappa$) of continuous-time processes satisfying the following condition:
\newline

\noindent\textbf{Assumption $(\mathcal A)$:} The processes are \textit{locally asymptotically self-similar}; their tangent processes' increment processes are \textit{autocovariance ergodic}.
\newline

Here the autocovariance-ergodicity means that the sample autocovariance functions of the covariance stationary process converges in squared mean to the autocovaraince functions of the process, i.e., a zero-mean (that is the case for the tangent processes' increments) continuous-time process $\{X(t)\}_{t\ge0}$ is autocovariance ergodic if it is covariance stationary and satisfies
\begin{equation}
\label{mean-square:ergodic}
\frac{1}{T}\int_0^TX(t+\tau)X(t)\ud t\xrightarrow[T\to+\infty]{L^2(\mathbb P)}\mathbb E\left(X(u+\tau)X(u)\right),~\mbox{for all $u>0,\tau\ge0$},
\end{equation}
where $X_n\xrightarrow[n\to+\infty]{L^2(\mathbb P)}X$ denotes the mean squared convergence: $\mathbb E|X_n-X|^2\xrightarrow[n\to+\infty]{}0$.
Note that the above convergence (\ref{mean-square:ergodic}) yields
\begin{equation}
\label{P:ergodic}
\frac{1}{n-\tau-1}\sum_{k=1}^{n-\tau}X(k)X(k+\tau)\xrightarrow[n\to+\infty]{\mathbb P}\mathbb E(X(1)X(1+\tau)),~\mbox{for $\tau\in\mathbb N$}.
\end{equation}
Thus Assumption $(\mathcal A)$ says that the observed processes' tangent processes have covariance stationary increments. The well-known examples of locally asymptotically self-similar processes satisfying Assumption $(\mathcal A)$ are fractional Brownian motions and multifractional Brownian motions \citep{Mandelbrot1968, Peltier1995, Benassi1997}.

The assumption of covariance stationarity inspires us to introduce a covariance-based dissimilarity measure between the sample paths, in order to capture the level of differences between the two corresponding covariance stationary processes. Later we show that the assumption of autocovariance-ergodicity is sufficient for the clustering algorithms to be approximately asymptotically consistent.

\section{Clustering Stochastic Processes Satisfying Assumption $(\mathcal A)$}
\label{Preliminary_Results}
\subsection{Covariance-based Dissimilarity Measure between Autocovariance Ergodic Processes}
\label{X}
Let $Z$ be a process satisfying Assumption $(\mathcal A)$. Denote by $Y$ its tangent process (see (\ref{local_self})) and denote by $X$ an increment process of $Y$, i.e., there is some $u\ge0$ such that $X(t)=Y(t+u)-Y(u)$ for all $t\ge0$. Under Assumption $(\mathcal A)$, $X$ is autocovariance ergodic. Since we will show that clustering distinct $Z$'s is approximately asymptotically equivalent to clustering the corresponding increment processes $X$'s, then the dissimilarity measures of $Z$'s can be constructed based on those of the autocovariance ergodic processes $X$'s.

From (\ref{zero-mean}) we know that the autocovariance process $X$ is zero-mean.  Our first main result is then introduction to the following \textit{covariance-based dissimilarity measure} between autocovariance ergodic processes \citep{PRZ19}.
\begin{defn}
\label{definition1}
The covariance-based dissimilarity measure $d$ between the discrete-time stochastic processes $X^{(1)}$, $X^{(2)}$ (in fact $X^{(1)}$, $X^{(2)}$ denote two covariance structures, each class may contain different process distributions) is defined by
\begin{equation}
\label{def:d*}
d\left(X^{(1)},X^{(2)}\right):=\sum_{m,l = 1}^{+\infty} w_m w_l\rho\left(\mathbb Cov(X_{l\ldots l+m-1}^{(1)}),\mathbb Cov(X_{l\ldots l+m-1}^{(2)})\right),
\end{equation}
where:
\begin{itemize}
\item For any integers $l\ge 1$, $m\ge0$, $X_{l\ldots l+m-1}^{(1)}$ is the shortcut notation of the row vector $\Big(X_l^{(1)},\ldots,X_{l+m-1}^{(1)}\Big)$.
\item The distance $\rho$ between 2 equal-sized covariance matrices $M_1,M_2$ is defined to be the \textit{Frobenius norm} of $M_1-M_2$. Recall that for a matrix $A_{M\times N}$, its Frobenius norm is defined by
$$
\left\|A_{M\times N}\right\|_F:=\sqrt{\sum_{i=1}^M\sum_{j=1}^Na_{ij}^2},
$$
where for each $(i,j)\in \{1,\ldots,M\}\times \{1,\ldots,N\}$, $a_{ij}$ denotes the $(i,j)$-coefficient of $A_{M\times N}$.
\item The sequence of positive weights $\{w_j\}_{j\ge1}$ should be chosen such that $d\big(X^{(1)},X^{(2)}\big) < +\infty$, i.e., the series on the right-hand side of Eq. (\ref{def:d*}) is convergent. The choice of $\{w_j\}_j$ will be discussed in the forthcoming simulation study in Section \ref{sec::exper_results}.
\end{itemize}
\end{defn}
\begin{rmk}
\label{remark-1}
It is important to note that Definition \ref{definition1} only defines the dissimilarity measures between \textit{discrete-time} stochastic processes (time series). This is the most common object studied in the literature on clustering stochastic processes. Considering time series is sufficient for practical applications for at least 2 reasons. First, continuous-time path is not observable in practice. Secondly, any (stochastically) continuous-time process can be approximated by its discrete-time paths. In what follows we will mean sample path by a finite-length subsequence of discretized stochastic process.
\end{rmk}
Thanks to the autocovariance-ergodicity of the sample processes, the dissimilarity measure $d$ can be estimated by the empirical dissimilarity measure $\widehat d$ below:
\begin{defn}
Given two processes' discrete-time sample paths $\mathbf x_j=(X_1^{(j)},\ldots,X_{n_j}^{(j)})$ for $j=1,2$, let $n=\min\{n_1,n_2\}$, then the empirical covariance-based dissimilarity measure between $\mathbf x_1$ and $\mathbf x_2$ is given by
\begin{equation}
\label{dxx}
\widehat{d}(\mathbf x_{1},\mathbf x_{2}):=\sum_{m= 1}^{m_n} \sum_{l= 1}^{n-m+1}\!\! w_m w_l \rho\left (\nu(X^{(1)}_{l\ldots l+m-1}), \nu(X^{(2)}_{l\ldots l+m-1})\right),
\end{equation}
where:
\begin{itemize}
    \item $m_n$ ($\le n$) is the largest dimension of the covariance matrix considered by $\widehat d$; in this framework we take $m_n=\lfloor \log n\rfloor$, i.e. the floor number of $\log n$  \citep{khaleghi2016,PRZ19}.
    \item For $j=1,2$, $1\le l\le n$ and $m\le n-l+1$, $\nu(X_{l\ldots l+m-1}^{(j)})$ denotes  the empirical covariance matrix of the process $X^{(j)}$'s path $(X_l^{(j)},\ldots,X_{l+m-1}^{(j)})$, which is given below:
\begin{equation}
\label{nu}
\nu\left(X_{l\ldots l+m-1}^{(j)}\right):=\frac{\sum_{i = l} ^{n-m+1} (X_i^{(j)}~\ldots~X_{i+m-1}^{(j)})^T(X_i^{(j)}~\ldots~X_{i+m-1}^{(j)})}{n-m-l+2},
\end{equation}
where $(\bullet)^T$ denotes the transpose of a matrix.
\end{itemize}
\end{defn}
\begin{rmk} \label{remark0}
 Since $X$ is autocovariance ergodic, every empirical covariance matrix $\nu(X_{l\ldots l+m-1})$ is a  consistent estimator of the covariance matrix $\mathbb Cov(X_{l\ldots l+m-1})$ under Frobenius norm and in probability, i.e.,
\begin{equation}
\label{consistent:estimator}
\left\|\nu(X_{l\ldots l+m-1})-\mathbb Cov(X_{l\ldots l+m-1})\right\|_F\xrightarrow[n\to+\infty]{\mathbb P}0,~\mbox{for any $l\ge0$}.
\end{equation}
\end{rmk}
Further, the fact that both $d$ and $\widehat{d}$ satisfy the triangle inequalities implies that $\widehat d$ is a consistent estimator of $d$. The proof is quite similar to that of Lemma 1 in \cite{PRZ19}, except that in the former statement the convergence holds in probability. These ergodicity and triangle inequalities are the keys to demonstrate that our algorithms in the next section are approximately asymptotically consistent. We list them in the following remarks.
\begin{rmk} \label{remark1}
For every pair of paths
$$\mathbf{x_1}=\left(X_1^{(1)},\ldots,X_{n_1}^{(1)}\right), \quad \mbox{and} \quad \mathbf{x_2}=\left(X_1^{(2)},\ldots,X_{n_2}^{(2)}\right),
$$
sampled from two autocovariance ergodic processes $X^{(1)}$ and $X^{(2)}$ respectively, we have
\begin{align}
\label{limdxx}
\widehat{d}\left(\mathbf{x}_{1},\mathbf{x}_{2}\right) \xrightarrow[n_1,n_2 \rightarrow +\infty]{\mathbb P} d\left(X^{(1)},X^{(2)}\right),
\end{align}
and
\begin{align}
\label{limdxx1}
\widehat{d}\left(\mathbf{x}_{i},X^{(j)}\right) \xrightarrow[n_i \rightarrow \infty]{\mathbb P} d\left(X^{(1)},X^{(2)}\right), ~\mathrm{for}~ i,j\in\{1,2\},
\end{align}
where the dissimilarity measure $\widehat{d}\left(\mathbf{x}_{i},X^{(j)}\right)$ between the sample path $\mathbf{x}_{i}$ and the stochastic process $X^{(j)}$ is defined to be
$$
\widehat{d}\left(\mathbf{x}_{i},X^{(j)}\right):=\widehat{d}\left(\mathbf{x}_{i},\left(X_1^{(j)},\ldots,X_{n_i}^{(j)}\right)\right).
$$
\end{rmk}

\begin{rmk}
\label{remark2}
Thanks to their definitions, the triangle inequalities hold for the covariance-based dissimilarity measure $d$ in (\ref{def:d*}), as well as for its empirical estimates $\widehat{d}$ in (\ref{dxx}). Therefore for arbitrary processes $X^{(i)},~i = 1,2,3$ and arbitrary random vectors $\mathbf x_{i},~i=1,2,3$ we have
\begin{eqnarray*}
&&d\big(X^{(1)},X^{(2)}\big)  \leq d\big(X^{(1)},X^{(3)}\big) + d\big(X^{(3)},X^{(2)}\big),
\\
&&\widehat {d}(\mathbf{x}_{1},\mathbf{x}_{2})  \leq \widehat{d}(\mathbf{x}_{1},\mathbf{x}_{3}) + \widehat{d}(\mathbf{x}_{2},\mathbf{x}_{3}),
\\
&&\widehat{d}\big(\mathbf{x}_{1},X^{(1)}\big) \leq \widehat{d}\big(\mathbf{x}_{1},X^{(2)}\big) + d\big(X^{(1)},X^{(2)}\big).
\end{eqnarray*}
\end{rmk}
In the next section we define a proper covariance-based dissimilarity measure between locally asymptotically self-similar processes satisfying Assumption ($\mathcal A$), based on the dissimilarity measure $d$.
\subsection{Covariance-based Dissimilarity Measure between Locally Asymptotically Self-similar Processes}
Now under  Assumption ($\mathcal A$), we study the asymptotic relationship between the locally asymptotically self-similar process $\{Z_t^{(H(t))}\}_t$ in (\ref{local_self}) and its tangent process' increment process. The following result reveals the relationship between local asymptotic self-similarity and covariance stationarity.
\begin{prop}
\label{prop}
Let $\big\{Z_t^{(H(t))}\big\}_{t\ge0}$ be a locally asymptotically self-similar process satisfying Assumption ($\mathcal A$). For each $h>0$,
\begin{equation}
\label{local_self_1}
\left\{\frac{Z_{t+\tau (u+h)}^{(H(t+\tau (u+h)))}-Z_{t+\tau u}^{(H(t+\tau u))}}{\tau^{H(t)}}\right\}_{u\ge0}\xrightarrow[\tau\to0^+]{\mbox{f.d.d.}}\left\{X_{u,h}^{(H(t))}\right\}_{u\ge0},
\end{equation}
where $\big\{X_{u,h}^{(H(t))}\big\}_{u\ge0}:=\big\{Y_{u+h}^{(H(t))}-Y_{u}^{(H(t))}\big\}_{u\ge0}$ (see (\ref{local_self})) is an autocovariance ergodic process.
\end{prop}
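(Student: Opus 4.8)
The plan is to derive (\ref{local_self_1}) directly from the defining convergence (\ref{local_self}) of local asymptotic self-similarity, by rewriting the increment appearing on the left-hand side of (\ref{local_self_1}) as a difference of two terms, each of which is of the form handled by (\ref{local_self}). Concretely, for fixed $t\ge0$ and $h>0$, I would write
\begin{equation*}
\frac{Z_{t+\tau (u+h)}^{(H(t+\tau (u+h)))}-Z_{t+\tau u}^{(H(t+\tau u))}}{\tau^{H(t)}}=\frac{Z_{t+\tau (u+h)}^{(H(t+\tau (u+h)))}-Z_{t}^{(H(t))}}{\tau^{H(t)}}-\frac{Z_{t+\tau u}^{(H(t+\tau u))}-Z_{t}^{(H(t))}}{\tau^{H(t)}}.
\end{equation*}
By (\ref{local_self}), the joint finite-dimensional distributions of the process $\{(Z_{t+\tau v}^{(H(t+\tau v))}-Z_t^{(H(t))})/\tau^{H(t)}\}_{v\ge0}$ converge, as $\tau\to0^+$, to those of $\{Y_v^{(H(t))}\}_{v\ge0}$. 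Since taking differences of coordinates is a continuous (indeed linear) map on $\R^n$, the continuous mapping theorem applied to each finite-dimensional projection gives that the left-hand side of (\ref{local_self_1}) converges in f.d.d. to $\{Y_{u+h}^{(H(t))}-Y_u^{(H(t))}\}_{u\ge0}$, which is exactly $\{X_{u,h}^{(H(t))}\}_{u\ge0}$.

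It then remains to check that $\{X_{u,h}^{(H(t))}\}_{u\ge0}$ is autocovariance ergodic. For this I would invoke Assumption ($\mathcal A$): the tangent process $\{Y_u^{(H(t))}\}_u$ has increment processes that are autocovariance ergodic, and $\{X_{u,h}^{(H(t))}\}_u=\{Y_{u+h}^{(H(t))}-Y_u^{(H(t))}\}_u$ is precisely the increment process of $\{Y_u^{(H(t))}\}_u$ at lag $h$. Hence it is autocovariance ergodic by assumption, and in particular covariance stationary with finite second moments and zero mean (the zero-mean property following from (\ref{zero-mean}) via Proposition \ref{prop:tangent}).

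One subtlety worth spelling out carefully is that f.d.d. convergence of a vector-valued object is, by definition, convergence of every finite collection of its time-coordinates jointly; so when I split the increment into two pieces I must make sure both pieces are evaluated using the same scaling $\tau^{H(t)}$ and the same base point $t$, which is the case here since the exponent in (\ref{local_self_1}) is $H(t)$, not $H(t+\tau u)$. The main (minor) obstacle is therefore bookkeeping: confirming that the object $\{(Z_{t+\tau v}^{(H(t+\tau v))}-Z_t^{(H(t))})/\tau^{H(t)}\}_{v\ge0}$ indexed by $v$ is genuinely the one governed by (\ref{local_self}), so that reading off coordinates at $v=u$ and $v=u+h$ and subtracting is legitimate. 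No hard analysis is needed beyond the continuous mapping theorem on $\R^n$ and the invocation of Assumption ($\mathcal A$); the rest is a direct substitution.
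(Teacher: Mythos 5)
Your proposal is correct and follows essentially the same route as the paper's proof: both use the f.d.d.\ convergence (\ref{local_self}) jointly at the time points $u$ and $u+h$ (with the common base point $t$ and scaling $\tau^{H(t)}$), apply the continuous mapping theorem to the difference map, and then invoke Assumption ($\mathcal A$) directly for the autocovariance ergodicity of $\{Y_{u+h}^{(H(t))}-Y_u^{(H(t))}\}_u$. Your bookkeeping remark about the common scaling exponent is precisely the point the paper handles by writing the joint vector of paired increments before subtracting.
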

\begin{proof}
Let's fix $h>0$ and pick any finite discrete time indexes set $T\subset[0,+\infty)$. Under Assumption ($\mathcal A$), the f.d.d. convergence (\ref{local_self}) holds. It then implies
\begin{equation}
\label{conv:law}
\left(\frac{Z_{t+\tau (u+h)}^{(H(t+\tau (u+h)))}-Z_t^{(H(t))}}{\tau^{H(t)}},\frac{Z_{t+\tau u}^{(H(t+\tau u))}-Z_t^{(H(t))}}{\tau^{H(t)}}\right)_{u\in T}\xrightarrow[\tau\to0^+]{\mbox{law}}\left(Y_{u+h}^{(H(t))},Y_{u}^{(H(t))}\right)_{u\in T},
\end{equation}
where we adopt the notation $(a_u,b_u)_{u\in\{u_1,\ldots,u_N\}}$ to denote the vector $$(a_{u_1},b_{u_1},a_{u_2},b_{u_2},\ldots,a_{u_N},b_{u_N}).$$
It follows from (\ref{conv:law}) and the continuous mapping theorem that
\begin{equation}
\label{conv:law_1}
\left(\frac{Z_{t+\tau (u+h)}^{(H(t+\tau (u+h)))}-Z_t^{(H(t))}}{\tau^{H(t)}}-\frac{Z_{t+\tau u}^{(H(t+\tau u))}-Z_t^{(H(t))}}{\tau^{H(t)}}\right)_{u\in T}\xrightarrow[\tau\to0^+]{\mbox{law}}\left(Y_{u+h}^{(H(t))}-Y_{u}^{(H(t))}\right)_{u\in T}.
\end{equation}
(\ref{local_self_1}) then results from (\ref{conv:law_1}) and the fact that the choice of $T$ is arbitrary. Under  Assumption ($\mathcal A$), $\big\{X_{u,h}^{(H(t))}\big\}_{u}:=\big\{Y_{u+h}^{(H(t))}-Y_{u}^{(H(t))}\big\}_{u}$ is autocovariance ergodic, hence Proposition \ref{prop} is proved.
\end{proof}
From a statistical point of view, the sequence at the left-hand side of (\ref{local_self_1}) can not straightforwardly serve to estimate the distribution of the right-hand side $\{X_{u,h}^{H(t)}\}_u$, because the functional index $H(\bullet)$ at the left-hand side is not observable in practice. To overcome this inconvenience we note that (\ref{local_self_1}) can be interpreted as: when $\tau$ is sufficiently small,
\begin{equation}
\label{local_self_2}
\left\{Z_{t+ \tau(u+h)}^{(H(t+\tau(u+h)))}-Z_{t+ \tau u}^{(H(t+\tau u))}\right\}_{u\in[0,Kh]}\stackrel{\mbox{f.d.d.}}{\approx}\left\{\tau^{H(t)}X_{u,h}^{(H(t))}\right\}_{u\in[0,Kh]},
\end{equation}
where $K$ is an arbitrary positive integer. Statistically, (\ref{local_self_2}) says that: given a discrete-time path $Z_{t_1}^{(H(t_1))},\ldots,Z_{t_n}^{(H(t_n))}$ with $t_i=ih\Delta t$ for each $i\in\{1,\ldots,n\}$, sampled from a locally asymptotically self-similar process $\{Z_t^{(H(t))}\}_t$, its \textit{localized increment paths} with time index around $t_i$, i.e.,
\begin{equation}
\label{increment:z}
\mathbf z^{(i)}:=\left(Z_{t_{i+1}}^{(H(t_{i+1}))}-Z_{t_i}^{(H(t_i))},\ldots,Z_{t_{i+1+K}}^{(H(t_{i+1+K}))}-Z_{t_{i+K}}^{(H(t_{i+K}))}\right),
\end{equation}
is \textit{approximately} distributed as an autocovariance ergodic increment process of the self-similar process $\left\{{\Delta t}^{H(t_i)}X_{u,h}^{(H(t_i))}\right\}_{u\in[0,Kh]}$. This fact drives us to define the empirical covariance-based dissimilarity measure between two paths of locally asymptotically self-similar processes $\mathbf z_1$ and $\mathbf z_2$ as follows:
\begin{equation}
\label{dissimilairty_local}
\widehat{d^{*}}(\mathbf z_1,\mathbf z_2):=\frac{1}{L}\sum_{i=1}^{L}\widehat{d}\left(\mathbf z_1^{(i)},\mathbf z_2^{(i)}\right),
\end{equation}
where:
\begin{itemize}
    \item $L\le n-K-1$ is a positive integer not depending on $K$;
    \item $\mathbf z_1^{(i)}$, $\mathbf z_2^{(i)}$ are the localized increment paths defined as in (\ref{increment:z}). Heuristically speaking, for $i=1,\ldots,n-K-1$, $\widehat{d}(\mathbf z_1^{(i)},\mathbf z_2^{(i)})$ computes the \enquote{distance} between the 2 covariance structures (of the increments of $\{Z_t^{H(t)}\}_t$) indexed by the time in the neighborhood of $t_i$, and $\widehat{d^{*}}(\mathbf z_1,\mathbf z_2)$ averages the above distances. It is worth noting that the value $K$ describes the \enquote{sample size} used to approximate each local distance $\hat d$. Therefore its value should be picked neither too large nor too small and it can depend on $n$.
    \end{itemize}

The following observation is straightforward.
\begin{rmk}
\label{remark3}
Based on the definition (\ref{dissimilairty_local}) and Remark \ref{remark0}, $\widehat{d^*}$ is also a (weakly) consistent estimator of $d$ (see Remark \ref{remark1}) and it also satisfies the triangle inequalities as in Remark \ref{remark2}.
\end{rmk}
Through employing the dissimilarity measure $\widehat{d^*}$ on locally asymptotically self-similar processes, we obtain the so-called \enquote{approximately asymptotically consistent algorithms}, which are introduced in the following section.

\section{Approximately Asymptotically Consistent Algorithms}
\label{sec::algo_consist}
\subsection{Offline and Online Algorithms}
Note that the covariance-based dissimilarity measure $\widehat{d^*}$ defined in (\ref{dissimilairty_local}) will aim to cluster covariance structures, not process distributions, therefore the ground truth of clustering should be based on covariance structures. We thus define the ground truth as follows \citep{PRZ19}.
\begin{defn}[Ground truth of covariance structures]
\label{ground-truth}
Let $G=\big\{G_1,\ldots,G_\kappa\big\}$ be a partitioning of $\mathbb N$ into $\kappa$ disjoint sets $G_k$, $k=1,\ldots,\kappa$, such that the means and covariance structures of $\mathbf x_i$, $i\in \mathbb N$ are identical, if and only if $i\in G_k$ for some $k=1,\ldots,\kappa$. Such $G$ is called ground truth of covariance structures. For $N\ge1$, we denote by $G|_{N}$ the restriction of $G$ to the first $N$ sequences:
$$
G|_{N}=\big\{G_k\cap \{1,\ldots,N\}:~k=1,\ldots,\kappa\big\}.
$$
\end{defn}
The processes $Z$ satisfying Assumption $(\mathcal A)$ are generally not covariance stationary, however their tangent processes' increments $X$ are covariance stationary. In view of (\ref{local_self_1}) and (\ref{local_self_2}), clustering these processes $Z$ is equivalent to clustering $X$, based on the covariance structure ground truth of the latter increments. Below we will introduce algorithms aiming to approximate the covariance structure ground truth of $X$.

Depending on how the information is collected, the processes clustering problems consist of dealing with two separate model settings: offline setting and online setting. In the offline setting, the sample size and each path length are time-independent. However, in the online setting, they may both grow with time.  As stated in \cite{khaleghi2016}, using the offline algorithm in the online setting by simply applying it to the entire data observed at every time step, does not result in an asymptotically consistent algorithm. As a result, we consider clustering offline and online datasets as 2 approaches and study them separately. Hence the approximated asymptotic consistency will be described in Theorem \ref{thm:offline} and Theorem \ref{thm:online} below, respectively for offline and online clustering algorithms. Our offline and online clustering algorithms below are obtained by replacing the dissimilarity measures in Algorithms 1 and 2 in \cite{PRZ19} with $\widehat{d^*}$.

For the offline setting, we cluster observed data using Algorithm \ref{algo::offline_known_k} below. It is a 2-point initialization centroid-based clustering approach. Under the distance $\widehat{d^*}$ defined in (\ref{dissimilairty_local}), the algorithm picks the farthest two points to be the first two cluster centers (Lines 1-2). Then iteratively, each next cluster center is chosen to be the point farthest to all the previously assigned cluster centers (Lines 3-5). Finally the algorithm assigns each remaining point to the nearest cluster (Lines 7-10).
\begin{algorithm}[h]\caption{Offline clustering} \label{algo::offline_known_k}

\LinesNumbered
\KwIn{\textbf{\textit{sample paths}} $S= \left \{ \mathbf{z}_1,\ldots,\mathbf{z}_N \right \}$; \textbf{\textit{number of clusters}} $\kappa$.}

$(c_1,c_2) \longleftarrow \argmax\limits_{(i,j)\in\{1,\ldots,N\}^2, i<j}\widehat{d^*}(\mathbf z_i,\mathbf z_j)$\;
$C_1 \longleftarrow \left \{ c_1 \right \}$; $C_2\longleftarrow\{c_2\}$\;
\For{$k = 3,\ldots,\kappa$}{
$c_k \longleftarrow \displaystyle\argmax_{i=1,\ldots,N}\displaystyle\min_{j = 1,\ldots,k-1} \widehat{d^*}(\mathbf{z}_i, \mathbf{z}_{c_j})$\;
}
\textbf{\textit{Assign the remaining points to the nearest centers}:}

\For{$i = 1,\ldots,N$}{
$k \longleftarrow \argmin\limits_{k\in\{1,\ldots,\kappa\}}\left\{\widehat{d^*}(\mathbf{z}_i, \mathbf{z}_j):~j \in C_k\right\}$;\\
$C_k \longleftarrow C_k \cup \left \{ i\right \}$\;
}
\KwOut{The $\kappa$ clusters $f(S,\kappa,\widehat{d^*})=\{C_1, \ldots, C_\kappa\}$.}
\end{algorithm}

The strategy for clustering online data is presented in Algorithm \ref{algo::online_known_k} as follows. At each time $t$, first update the collection of sample paths $S(t)$ (Lines 1-2). Then for each $j= \kappa,\ldots,N(t)$, use Algorithm \ref{algo::offline_known_k} to group the first $j$ paths in $S(t)$ into $\kappa$ clusters (Lines 6-7); for each cluster the center is selected as the point having the \textit{smallest} index among that cluster, and order these indexes increasingly (Line 8); calculate the minimum inter-cluster distance $\gamma_j$ and update the normalization factor $\eta$ (Line 9-11).  Finally, every sample path in $S(t)$ is assigned to the \enquote{nearest} cluster, based on the weighted combination of the distances (given in Line 15) between this observation and the candidate cluster centers obtained at each iteration on $j$ (Lines 14-17).

\begin{algorithm}[h]
\caption{Online clustering}
\label{algo::online_known_k}

\LinesNumbered
\KwIn{\textbf{\textit{Sample paths}} $\Big\{S(t)=\{\mathbf z_1^t,\ldots,\mathbf z_{N(t)}^t\}\Big\}_t$; \textbf{\textit{number of clusters}} $\kappa$; \textbf{\textit{a sequence of weights}} $\{\beta_j\}_j$.}

\For{$t = 1,\ldots,\infty$}{
\textbf{\textit{Obtain new paths:} $S(t) \longleftarrow \Big\{\mathbf{z}_1^t,\dots,\mathbf{z}_{N(t)}^t\Big \}$}\;
\textbf{\textit{Initialize the normalization factor}: $\eta \longleftarrow 0$}\;
\textbf{\textit{Initialize the $\kappa$ clusters}: $C_k(t) \longleftarrow \emptyset,~k = 1,\ldots,\kappa$}\;
\textbf{\textit{Generate }$N(t)-\kappa+1$ \textit{candidate cluster centers}:}

\For{$j = \kappa,\ldots,N(t)$}{
$\big \{C_1^j,\ldots, C_{\kappa}^j\big \} \longleftarrow\mbox{\textbf{Alg1}}\big( \big \{\mathbf{z}_1^t,\ldots, \mathbf{z}_j^t \big \}, \kappa, \widehat{d^*} \big)$\;
$(c_1^j,\ldots,c_k^j) \longleftarrow \mbox{sort}(\min \big \{ i\in C_k^j \big \}: k = 1,\ldots,\kappa$); $\backslash\backslash$ \textbf{\textit{Choose the smallest index to be the one for cluster center, and sort them increasingly}}\;
$\gamma_j \longleftarrow \min\limits_{k,k'\in \{1,\ldots,\kappa\},k\neq k'} \widehat{d^*}\big(\mathbf{z}_{c_k^j}^t,\mathbf{z}_{c_{k'}^j}^t\big)$\;
$w_j \longleftarrow \beta_j$; $\backslash\backslash$ \textbf{\textit{$w_j$ is the weight used in $\widehat{d^*}$}}\;
$\eta \longleftarrow \eta+w_j\gamma_j$; $\backslash\backslash$ \textbf{\textit{Update the  normalization factor}}\;}
\textbf{\textit{Assign each point to one of the $\kappa$ clusters}:}

\For{$i = 1,\ldots,N(t)$}{
$k \longleftarrow \argmin\limits_{k'\in \{1,\ldots,\kappa\}}\frac{1}{\eta} \sum\limits_{j=\kappa}^{N(t)}w_j \gamma_j \widehat{d^*}\big(\mathbf{z}_{i}^t,\mathbf{z}_{c_{k'}^j}^t\big)$\;
$C_k(t) \longleftarrow C_k(t)\cup \left\{ i \right\}$\;
}}
\KwOut{ \textit{The} $\kappa$ \textit{clusters} $f(S(t),\kappa,\widehat{d^*})=\left \{C_1(t), \dots, C_{\kappa}(t) \right \}$, $t=1,2,\ldots,\infty$.}
\end{algorithm}

\subsection{Computational Complexity and Consistency of the Algorithms}
We describe the computational complexity based on the number of computations of the distance $\rho$. For Algorithm \ref{algo::offline_known_k}, the 2-point initialization requires $N(N-1)/2$ times calculations of $\widehat{d^*}$. From (\ref{dissimilairty_local}) we see that each calculation of $\widehat{d^*}$ consists of $n_{\min}-K-1$ times computations of $\hat d$. By (\ref{dxx}), $\hat d$ can be obtained through computing $K-\log K+1$ times distances $\rho$. Therefore total number of computations of $\rho$ is not greater than $N(N-1)(n_{\min}-K-1)(K-\log K+1)/2$. For Algorithm \ref{algo::online_known_k}, since at each step $j\in\{\kappa,\ldots,N-\kappa+1\}$, Algorithm \ref{algo::offline_known_k} is run on $j$ observations, the total number of $\rho$'s computations is then less than $(n_{\min}-K-1)(K-\log K+1)\sum_{j=\kappa}^{N-\kappa+1}j(j-1)/2$. The computational complexity is acceptable in practice, and it is quite competitive to the existing algorithms for clustering stochastic processes.

Now we introduce the notion of approximately asymptotic consistency. Fix a positive integer $K$. Let $Z^{(1)}$, $Z^{(2)}$ be 2 locally asymptotically self-similar processes with respect functional indexes $H_1(\bullet)$, $H_2(\bullet)$. Also let
$$
\big\{\mathbf z_1^{(1)},\ldots,\mathbf z_1^{(n-K-1)}\big\} \quad \textrm{and} \quad \big\{\mathbf z_2^{(1)},\ldots,\mathbf z_2^{(n-K-1)}\big\},
$$
be respectively their sample paths $\mathbf z_1$, $\mathbf z_2$' increments, defined as in (\ref{increment:z}). For $j=1,2$,  we define the normalized increments by taking the following linear transformation:
\begin{equation}
\label{norm_increment}
\mathcal H\big(\mathbf z_j^{(i)}\big):=\frac{\mathbf z_j^{(i)}}{{\Delta t}^{H_j(t_i)}},~\mbox{for}~i=1,\ldots,n-K-1.
\end{equation}
Then using (\ref{local_self_1}) we obtain
\begin{equation}
\label{local_self_3}
\mathcal H\big(\mathbf z_j^{(i)}\big)\xrightarrow[\Delta t\to0]{\mbox{law}}\left(X_{0,j}^{(H_j(t_i))},X_{h,j}^{(H_j(t_i))},\ldots,X_{Kh,j}^{(H_j(t_i))}\right),
\end{equation}
where $\left(X_{0,j}^{(H_j(t_i))},X_{h,j}^{(H_j(t_i))},\ldots,X_{Kh,j}^{(H_j(t_i))}\right)$ denotes a discrete-time path of the increment of a self-similar process with self-similarity index $H_j(t_i)$.
Fix $L\ge1$. For each empirical dissimilarity measure $\widehat{d^*}(\mathbf z_1,\mathbf z_2)$, we correspondingly define
\begin{equation}
\label{dissimilairty_local_approx}
\widetilde{d^{*}}(\mathbf z_1,\mathbf z_2):=\frac{1}{L}\sum_{i=1}^{L}\widehat{d}\left(\mathcal H(\mathbf z_1^{(i)}),\mathcal H(\mathbf z_2^{(i)})\right).
\end{equation}
$\widetilde{d^{*}}(\mathbf z_1,\mathbf z_2)$ is another dissimilarity measure between $\mathbf z_1,\mathbf z_2$, which has a tight relationship to the distance $\widehat{d^*}$ between their tangent processes' increments. Indeed by using (\ref{local_self_3}) and the continuous mapping theorem, it is easy to derive the following result.
\begin{prop}
\label{prop_converge_delta}
For any $N$ independent sample paths $\mathbf z_1,\mathbf z_2,\ldots,\mathbf z_N$,
\begin{equation}
\label{converge:measure}
\left(\widetilde{d^{*}}(\mathbf z_i,\mathbf z_j)\right)_{i,j\in\{1,\ldots,N\}, i\neq j}\xrightarrow[\Delta t\to0]{\mbox{law}}\left(\widehat{d^{*}}(\mathbf x_i,\mathbf x_j)\right)_{i,j\in\{1,\ldots,N\},i\neq j},
\end{equation}
where $\mathbf x_1,\mathbf x_2$ are the increments of the tangent processes corresponding to $\mathbf z_1,\mathbf z_2$ respectively.
\end{prop}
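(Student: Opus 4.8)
The plan is to exhibit both $\widetilde{d^{*}}$ and $\widehat{d^{*}}$ as one and the same fixed continuous function, evaluated respectively on the random array of normalized localized increments and on its limit in law, and then to apply the continuous mapping theorem. First I would repackage the data: for $i=1,\dots,N$ set
\[
\bm{\zeta}_i:=\big(\mathcal H(\mathbf z_i^{(1)}),\dots,\mathcal H(\mathbf z_i^{(L)})\big),\qquad \bm{\xi}_i:=\big(\mathbf x_i^{(1)},\dots,\mathbf x_i^{(L)}\big),
\]
where $\mathbf x_i^{(1)},\dots,\mathbf x_i^{(L)}$ are the $L$ localized increment windows (taken from a single sample path $\mathbf x_i$, in the sense of (\ref{increment:z})) of the increment process of the tangent process of $Z^{(i)}$ at $0$, each individually having the law on the right-hand side of (\ref{local_self_3}). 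From the defining formulas (\ref{dxx}), (\ref{dissimilairty_local}) and (\ref{dissimilairty_local_approx}) there is a single map $\Psi\colon\big(\mathbb R^{K+1}\big)^{L}\times\big(\mathbb R^{K+1}\big)^{L}\to\mathbb R$, namely $\Psi(\mathbf a,\mathbf b)=\frac1L\sum_{l=1}^{L}\widehat d\big(\mathbf a^{(l)},\mathbf b^{(l)}\big)$ with $\widehat d$ the finite double sum (\ref{dxx}) evaluated on two length-$(K+1)$ vectors, such that $\widetilde{d^{*}}(\mathbf z_i,\mathbf z_j)=\Psi(\bm{\zeta}_i,\bm{\zeta}_j)$ and $\widehat{d^{*}}(\mathbf x_i,\mathbf x_j)=\Psi(\bm{\xi}_i,\bm{\xi}_j)$. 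Each empirical covariance matrix $\nu(\cdot)$ in (\ref{nu}) is a quadratic, hence continuous, function of the entries of its argument, the Frobenius-norm distance $\rho$ is continuous, and the weights $w_m,w_l$ are fixed positive constants; therefore $\Psi$ is continuous, and so is the matrix-valued map $(\bm{\zeta}_1,\dots,\bm{\zeta}_N)\mapsto\big(\Psi(\bm{\zeta}_i,\bm{\zeta}_j)\big)_{i\neq j}$ on $\big((\mathbb R^{K+1})^{L}\big)^{N}$.

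Second I would establish the joint convergence in law $(\bm{\zeta}_1,\dots,\bm{\zeta}_N)\xrightarrow[\Delta t\to0]{\mbox{law}}(\bm{\xi}_1,\dots,\bm{\xi}_N)$. Since the sample paths $\mathbf z_1,\dots,\mathbf z_N$ are independent, the blocks $\bm{\zeta}_1,\dots,\bm{\zeta}_N$ are mutually independent, and so are the candidate limiting blocks $\bm{\xi}_1,\dots,\bm{\xi}_N$ (the processes $Z^{(i)}$ being independent); hence it suffices to prove $\bm{\zeta}_i\xrightarrow{\mbox{law}}\bm{\xi}_i$ for each fixed $i$ and then form the product of the limit laws. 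For fixed $i$, the windows $\mathbf z_i^{(1)},\dots,\mathbf z_i^{(L)}$ are all fixed linear images of the finitely many increments of $Z^{(i)}$ over the consecutive sub-intervals of $[0,(L+K+1)h\Delta t]$, so rescaling each of those increments by $\Delta t^{-H_i(0)}$ and letting $\Delta t\to0$, the f.d.d.\ convergence (\ref{local_self}) of the rescaled process at $t=0$ (which underlies (\ref{local_self_1})--(\ref{local_self_3})) already yields the joint convergence in law of $\Delta t^{-H_i(0)}\big(\mathbf z_i^{(1)},\dots,\mathbf z_i^{(L)}\big)$ to $\bm{\xi}_i$. Replacing the common exponent $H_i(0)$ by the window-dependent exponent $H_i(t_l)$ used in (\ref{norm_increment}) multiplies the $l$-th window by the deterministic factor $\Delta t^{H_i(0)-H_i(t_l)}$, which tends to $1$ since $t_l=lh\Delta t\to0$ and $H_i$ is (H\"older) continuous, so $\big(H_i(t_l)-H_i(0)\big)\log\Delta t\to0$; Slutsky's theorem then upgrades the previous convergence to $\bm{\zeta}_i\xrightarrow{\mbox{law}}\bm{\xi}_i$, which is precisely the joint-in-$l$ form of (\ref{local_self_3}).

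Finally, composing the continuity from the first step with the joint convergence from the second, the continuous mapping theorem gives
\[
\big(\widetilde{d^{*}}(\mathbf z_i,\mathbf z_j)\big)_{i\neq j}=\big(\Psi(\bm{\zeta}_i,\bm{\zeta}_j)\big)_{i\neq j}\xrightarrow[\Delta t\to0]{\mbox{law}}\big(\Psi(\bm{\xi}_i,\bm{\xi}_j)\big)_{i\neq j}=\big(\widehat{d^{*}}(\mathbf x_i,\mathbf x_j)\big)_{i\neq j},
\]
which is (\ref{converge:measure}). I expect the only genuinely delicate point to lie in the second step: (\ref{local_self_3}) is stated one localized window at a time, so one has to (i) read (\ref{local_self}) as an f.d.d.\ statement over an index set large enough to obtain the joint-in-$l$ convergence, and (ii) control the gap between the intrinsic scaling $\Delta t^{H_i(0)}$ and the window-dependent scaling $\Delta t^{H_i(t_l)}$ of (\ref{norm_increment}), which is where a mild regularity hypothesis on the functional index $H_i$ enters. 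Continuity of $\Psi$, the reduction via independence, and the concluding application of the continuous mapping theorem are all routine.
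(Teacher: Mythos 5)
Your proposal is correct and follows essentially the same route as the paper, whose own proof is a one-line appeal to (\ref{local_self_3}) together with the continuous mapping theorem. Your write-up simply supplies the details the paper leaves implicit — the joint-in-window and joint-in-path convergence obtained from the f.d.d.\ statement (\ref{local_self}) plus independence, and the control of the gap between the scalings $\Delta t^{H_i(0)}$ and $\Delta t^{H_i(t_l)}$, where your mild H\"older-type regularity assumption on $H_i$ is indeed needed since the paper only assumes continuity of the functional index.
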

In particular when $\Delta t=1$,  $\widetilde{d^*}(\mathbf z_1,\mathbf z_2)=\widehat{d^*}(\mathbf z_1,\mathbf z_2)$. In this sense  $\widetilde{d^*}(\mathbf z_1,\mathbf z_2)$ \enquote{approximates} $\widehat{d^*}$. $\widetilde{d^*}$ can not be observed in practice since the functional indexes of the locally asymptotically self-similar processes are supposed to be unknown. In what follows $\widetilde{d^*}$ only serves to define the approximate asymptotic consistency. In this notion \enquote{approximate} means the clustering locally asymptotically self-similar processes problem is \enquote{approximately} equivalent to the clustering their tangent processes problem.

Now we state the consistency theorems. Through Theorems \ref{thm:offline} and \ref{thm:online} below we show that Algorithms \ref{algo::offline_known_k} and \ref{algo::online_known_k} are both approximately asymptotically consistent. Their proofs are inspired by the ones in  \cite{khaleghi2016} and \cite{PRZ19}. However different from the consistent theorems in \cite{khaleghi2016} and \cite{PRZ19}, the convergences in Theorems \ref{thm:offline} and \ref{thm:online} have a weaker sense, which are in probability, not almost sure.
\begin{thm}
\label{thm:offline}
Under Assumption $(\mathcal A)$, Algorithm \ref{algo::offline_known_k} is approximately asymptotically consistent for clustering the offline sample paths $S=\{\mathbf z_1,\ldots,\mathbf z_N\}$. This means: if $\widehat{d^*}$ is replaced with $\widetilde{d^*}$ in Algorithm \ref{algo::offline_known_k}, then the output clusters converge to the covariance structure ground truths of the increments of the corresponding tangent processes $S'=\{\mathbf x_1,\ldots,\mathbf x_N\}$ in probability, as $\Delta t\to0$ and $n_{\min}:=\min\{n_1,\ldots,n_N\}\to+\infty$. More formally,
\begin{equation}
\label{converge:offline}
\lim_{n_{\min}\to+\infty}\lim_{\Delta t\to0}\mathbb P\left(f(S,\kappa,\widetilde{d^*})=G_{S'}\right)=1,
\end{equation}
where $f$ is given in Algorithm \ref{algo::offline_known_k} and $G_{S'}$ denotes the ground truths of the covariance structures that generate the set of paths $S'$.
\end{thm}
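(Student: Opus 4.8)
The plan is to combine the two approximation layers into one double-limit argument. The outer structure mimics the consistency proof for centroid-based clustering from \cite{khaleghi2016} and \cite{PRZ19}: if all pairwise empirical dissimilarities $\widehat{d^*}(\mathbf z_i,\mathbf z_j)$ (with $\widehat{d^*}$ replaced by $\widetilde{d^*}$) are sufficiently close to the true dissimilarities $d(X^{(k)},X^{(k')})$ between the underlying covariance structures of the tangent processes' increments, then the $2$-point initialization picks one centroid from each of the $\kappa$ ground-truth clusters, and the nearest-centroid assignment step recovers $G_{S'}$ exactly. So the first step is to make this ``sufficiently close'' quantitative: let
\[
\delta := \tfrac{1}{5}\min_{k\neq k'} d\bigl(X^{(k)},X^{(k')}\bigr) > 0,
\]
which is strictly positive because distinct clusters have distinct covariance structures and $d$ separates them; then show that on the event
\[
\mathcal E_{\delta} := \Bigl\{\, \bigl|\widetilde{d^*}(\mathbf z_i,\mathbf z_j) - d(X^{(k(i))},X^{(k(j))})\bigr| < \delta \ \ \text{for all } i,j \,\Bigr\}
\]
the output of Algorithm \ref{algo::offline_known_k} equals $G_{S'}$. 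This is a deterministic combinatorial claim, essentially identical to the corresponding step in \cite{PRZ19}; I would state it as a lemma and refer to that proof, checking only that the triangle inequalities (Remark \ref{remark2}, extended to $\widetilde{d^*}$ via Remark \ref{remark3}) are all that is used.

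The second and main step is to show $\mathbb P(\mathcal E_\delta) \to 1$ under the iterated limit $\lim_{n_{\min}\to\infty}\lim_{\Delta t\to 0}$. Here the two approximations are peeled off in order. First, for fixed $n_{\min}$, let $\Delta t \to 0$: by Proposition \ref{prop_converge_delta}, the finite vector $\bigl(\widetilde{d^*}(\mathbf z_i,\mathbf z_j)\bigr)_{i\neq j}$ converges in law to $\bigl(\widehat{d^*}(\mathbf x_i,\mathbf x_j)\bigr)_{i\neq j}$, where the $\mathbf x$'s are length-$n$ sample paths of the tangent processes' increments. Since $\mathcal E_\delta$ is (up to boundary, which has probability zero for the continuous limiting law) an open event determined by this finite vector, the Portmanteau theorem gives
\[
\liminf_{\Delta t\to 0}\mathbb P(\mathcal E_\delta) \ \ge\ \mathbb P\Bigl(\bigl|\widehat{d^*}(\mathbf x_i,\mathbf x_j) - d(X^{(k(i))},X^{(k(j))})\bigr| < \delta \ \ \forall\, i,j\Bigr).
\]
Second, let $n_{\min}\to\infty$: because each tangent process's increment process is autocovariance ergodic under Assumption $(\mathcal A)$, Remark \ref{remark3} (the weak consistency of $\widehat{d^*}$, inherited from Remark \ref{remark1}) gives $\widehat{d^*}(\mathbf x_i,\mathbf x_j) \xrightarrow{\mathbb P} d(X^{(k(i))},X^{(k(j))})$; a union bound over the finitely many pairs $(i,j)$ then sends the right-hand probability above to $1$. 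Chaining the two limits yields $\lim_{n_{\min}\to\infty}\lim_{\Delta t\to 0}\mathbb P(\mathcal E_\delta)=1$, and combined with Step~1 this is exactly \eqref{converge:offline}.

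The hard part will be the first limit: Proposition \ref{prop_converge_delta} only gives convergence \emph{in distribution} of the dissimilarity vector, not in probability, so I cannot simply intersect events along both limits simultaneously — this is precisely why the theorem is stated with an iterated limit and why the conclusion is convergence in probability rather than almost surely. Care is needed to confirm that $\partial\mathcal E_\delta$ is a null set for the limiting distribution of $\bigl(\widehat{d^*}(\mathbf x_i,\mathbf x_j)\bigr)$; this holds generically because that limiting law is that of a vector of Frobenius-norm distances of empirical covariance matrices built from jointly Gaussian-type tangent increments, which has no atom on the finitely many threshold hyperplanes $\{\,\cdot = d(X^{(k)},X^{(k')})\pm\delta\,\}$ — but one should either verify absolute continuity or, more robustly, replace the strict inequality in $\mathcal E_\delta$ by a slightly shrunk closed event and a slightly enlarged open event and squeeze, which avoids any regularity assumption on the limiting law. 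A secondary technical point is uniformity in the initialization: the $\argmax$ in Line~1 of Algorithm \ref{algo::offline_known_k} must, on $\mathcal E_\delta$, land on a genuinely inter-cluster pair, which is where the factor $\tfrac15$ in the definition of $\delta$ is spent; this bookkeeping is routine and I would import it verbatim from \cite{PRZ19}.
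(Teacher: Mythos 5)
Your proposal is correct and follows essentially the same route as the paper: first pass $\Delta t\to 0$ using the convergence in law of the dissimilarity vector (Proposition \ref{prop_converge_delta}), then pass $n_{\min}\to+\infty$ using the ergodicity-based consistency of $\widehat{d^*}$ together with the triangle inequalities and the farthest-point/nearest-center combinatorial argument, exactly the two steps (\ref{converge:t}) and (\ref{cluster:x}) in the paper (the paper works with a threshold $\varepsilon<\delta_{\min}/4$ and sample-to-process distances rather than your single event $\mathcal E_{\delta}$ with $\delta=\delta_{\min}/5$, which is a cosmetic difference). Your handling of the first limit via the open-set Portmanteau lower bound is, if anything, a slightly more careful rendering of the paper's claimed equality of limits, and the boundary-regularity worry you raise is unnecessary since the one-sided open-set inequality already suffices.
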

\begin{proof}
First, we show
\begin{equation}
\label{converge:t}
\mathbb P\left(f(S,\kappa,\widetilde{d^*})=G\right)\xrightarrow[\Delta t\to0]{}\mathbb P\left(f(S',\kappa,\widehat{d^*})=G\right),
\end{equation}
where $G=\{C_1,\ldots,C_\kappa\}$ denotes any $\kappa$-partition of $\{1,\ldots,N\}$. Since if the estimated clusters $f(S,\kappa,\widetilde{d^*})=G$, the  samples in $S$  that  are  generated  by  the same cluster in $G$ are  closer under $\widetilde{d^*}$  to  each  other  than  to  the  rest  of  the samples. Hence we can write
\begin{eqnarray}
\label{global}
&&\mathbb P\left(f(S,\kappa,\widetilde{d^*})=G\right)\nonumber\\
&&=\mathbb P\left(\bigcup_{\varepsilon>0}\left(\left\{\max_{\substack{l \in \{1,\ldots,\kappa\} \\i,j \in C_l }} \widetilde{d^*}(\mathbf{z}_i, \mathbf{z}_j)< \varepsilon\right\} \bigcap
\left\{\min_{\substack{k,k'\in\{1,\ldots,\kappa\},~ k\neq k'\\ i \in C_k,~j \in C_{k'}}} \widetilde{d^*}(\mathbf{z}_i, \mathbf{z}_j)>\varepsilon\right\}\right)\right).\nonumber\\
\end{eqnarray}
It follows from (\ref{global}) and (\ref{converge:measure}) that
\begin{eqnarray*}
\label{global1}
&&\lim_{\Delta t\to 0}\mathbb P\left(f(S,\kappa,\widetilde{d^*})=G\right)\nonumber\\
&&=\mathbb P\left(\bigcup_{\varepsilon>0}\left(\left\{\max_{\substack{l \in \{1,\ldots,\kappa\} \\i,j \in C_l }} \widehat{d^*}(\mathbf{x}_i, \mathbf{x}_j)< \varepsilon\right\} \bigcap
\left\{\min_{\substack{ i \in C_k,~j \in C_{k'} \\ k,k'\in\{1,\ldots,\kappa\}\\ k\neq k'}} \widehat{d^*}(\mathbf{x}_i, \mathbf{x}_j)>\varepsilon\right\}\right)\right)\nonumber\\
&&=\mathbb P\left(f(S',\kappa,\widehat{d^*})=G\right),
\end{eqnarray*}
which proves (\ref{converge:t}).

Next we show that Algorithm \ref{algo::offline_known_k} is asymptotically consistent on clustering $S'$ under $\widehat{d^*}$:
\begin{equation}
\label{cluster:x}
\mathbb P\left(f(S',\kappa,\widehat{d^*})=G_{S'}\right)\xrightarrow[n_{\min}\to\infty]{}1.
\end{equation}
Denote by
$$
G_{S'}:=\{G_1,\ldots,G_\kappa\}.
$$
Fix $\delta>0$. Let $n_{\min}$ denote the shortest path length in $S'$:
\begin{align*}
n_{\min}:= \min\left\{n_i:~i\in\{1,\ldots,N\}\right\}.
\end{align*}
Denote by $\delta_{\min}$ the minimum non-zero dissimilarity measure between the processes with different covariance structures:
\begin{equation}
\label{delta_min}
\delta_{\min}:= \min\left\{{d}\left(X^{(k)},X^{(k')}\right):~k,k'\in\{1,\ldots,\kappa\},~k\neq k'\right\}.
\end{equation}
Fix $\varepsilon \in (0, \delta_{\min} /4)$ and let $\delta>0$ be arbitrarily small. Since there are a finite number $N$ of samples, by Remark \ref{remark1} there is $n_0$ such that for $n_{min}>n_0$ we have
\begin{align}
\label{d:epsilon}
\mathbb P\left(\max_{\substack{l \in \{1,\ldots,\kappa\} \\i \in G_l \cap \left \{ 1,\ldots,N \right \}}} \widehat{d^*}\left(\mathbf{x}_i, X^{(l)}\right)>\varepsilon\right)<\delta.
\end{align}
On one hand, by applying the triangle inequalities (see Remark \ref{remark3}), we obtain
\begin{eqnarray}
\label{upperbound}
&&\max_{\substack{l \in \{1,\ldots,\kappa\} \\i,j \in G_l \cap \left \{ 1,\ldots,N \right \}}} \widehat{d^*}(\mathbf{x}_i, \mathbf{x}_j)\nonumber\\
&&\leq \max_{\substack{ l \in \{1,\ldots,\kappa\} \\i,j \in G_l \cap \left \{ 1,\ldots,N \right \}}} \widehat{d^*}\left(\mathbf{x}_i, X^{(l)}\right)+\max_{\substack{l \in \{1,\ldots,\kappa\} \\i,j \in G_l \cap \left \{ 1,\ldots,N \right \}}} \widehat{d^*}\left(\mathbf{x}_j, X^{(l)}\right)\nonumber\\
&&=2\max_{\substack{ l \in \{1,\ldots,\kappa\} \\i \in G_l \cap \left \{ 1,\ldots,N \right \}}} \widehat{d^*}\left(\mathbf{x}_i, X^{(l)}\right).
\end{eqnarray}
Then by  (\ref{upperbound}) and the fact that $2\varepsilon<\delta_{\min}/2$, the following inclusion holds:
\begin{equation}
\label{upperbound1}
\left\{\max_{\substack{l \in \{1,\ldots,\kappa\} \\i \in G_l \cap \left \{ 1,\ldots,N \right \}}} \widehat{d^*}\left(\mathbf{x}_i, X^{(l)}\right)\le\varepsilon\right\}\subset\left\{\max_{\substack{l \in \{1,\ldots,\kappa\} \\i,j \in G_l \cap \left \{ 1,\ldots,N \right \}}} \widehat{d^*}(\mathbf{x}_i, \mathbf{x}_j)\le 2\varepsilon<\frac{\delta_{\min}}{2}\right\}.
\end{equation}
On the other hand, by applying the triangle inequalities (see Remark \ref{remark3}) and the fact that $2\varepsilon<\delta_{\min}/2$, we also obtain: if
$$
\max_{\substack{l \in \{1,\ldots,\kappa\} \\i \in G_l \cap \left \{ 1,\ldots,N \right \}}} \widehat{d^*}\left(\mathbf{x}_i, X^{(l)}\right)\le\varepsilon,
$$
then
\begin{eqnarray*}
&&\min_{\substack{ k,k'\in\{1,\ldots,\kappa\},~k\neq k'\\i \in G_k \cap \left \{ 1,\ldots,N \right \} \\j \in G_{k'} \cap \left \{ 1,\ldots,N \right \}}} \widehat{d^*}(\mathbf{x}_i, \mathbf{x}_j) \nonumber\\
&&\geq \min_{\substack{ k,k'\in\{1,\ldots,\kappa\},~k\neq k'\\i \in G_k \cap \left \{ 1,\ldots,N \right \} \\j \in G_{k'} \cap \left \{ 1,\ldots,N \right \}}}\left\{ d\left(X^{(k)}, X^{(k')}\right) - \widehat{d^*}\left(\mathbf{x}_i, X^{(k)}\right)- \widehat{d^*}\left(\mathbf{x}_j, X^{(k')}\right)\right\}  \nonumber\\
&&\geq \delta_{\min}-2\varepsilon> \frac{\delta_{\min}}{2}.
\end{eqnarray*}
Equivalently,
\begin{equation}
\label{lowerbound}
\left\{\max_{\substack{l \in \{1,\ldots,\kappa\} \\i \in G_l \cap \left \{ 1,\ldots,N \right \}}} \widehat{d^*}\left(\mathbf{x}_i, X^{(l)}\right)\le\varepsilon\right\}\subset\left\{\min_{\substack{k,k'\in\{1,\ldots,\kappa\},~k\neq k'\\ i \in G_k \cap \left \{ 1,\ldots,N \right \} \\j \in G_{k'} \cap \left \{ 1,\ldots,N \right \}}} \widehat{d^*}(\mathbf{x}_i, \mathbf{x}_j)>\frac{\delta_{\min}}{2}\right\}.
\end{equation}
It follows from (\ref{upperbound1}), (\ref{lowerbound}) and (\ref{d:epsilon}) that for $n_{\min}>n_0$,
\begin{eqnarray}
\label{upperglobal}
&&\mathbb P\left(\left\{\max_{\substack{l \in \{1,\ldots,\kappa\} \\i,j \in G_l \cap \left \{ 1,\ldots,N \right \}}} \widehat{d^*}(\mathbf{x}_i, \mathbf{x}_j)\ge \frac{\delta_{\min}}{2}\right\}\bigcup\left\{
\min_{\substack{ k,k'\in\{1,\ldots,\kappa\},~k\neq k'\\i \in G_k \cap \left \{ 1,\ldots,N \right \} \\j \in G_{k'} \cap \left \{ 1,\ldots,N \right \}}} \widehat{d^*}(\mathbf{x}_i, \mathbf{x}_j)\le\frac{\delta_{\min}}{2}\right\}\right)\nonumber\\
&&\le\mathbb P\left(\max_{\substack{l \in \{1,\ldots,\kappa\} \\i \in G_l \cap \left \{ 1,\ldots,N \right \}}} \widehat{d^*}\left(\mathbf{x}_i, X^{(l)}\right)>\varepsilon\right)<\delta.
\end{eqnarray}
Note that (\ref{upperglobal}) is equivalent to
\begin{eqnarray*}
&&\mathbb P\left(\left\{\max_{\substack{l \in \{1,\ldots,\kappa\} \\i,j \in G_l \cap \left \{ 1,\ldots,N \right \}}} \widehat{d^*}(\mathbf{x}_i, \mathbf{x}_j)< \frac{\delta_{\min}}{2}\right\}\bigcap\left\{
\min_{\substack{ k,k'\in\{1,\ldots,\kappa\},~k\neq k'\\i \in G_k \cap \left \{ 1,\ldots,N \right \} \\j \in G_{k'} \cap \left \{ 1,\ldots,N \right \}}} \widehat{d^*}(\mathbf{x}_i, \mathbf{x}_j)>\frac{\delta_{\min}}{2}\right\}\right)\nonumber\\
&&\xrightarrow[n_{\min}\to+\infty]{}1.
\end{eqnarray*}
This tells that the sample paths in $S$ that are generated by the same covariance structures are closer to each other than to the rest of sample paths. Then by (\ref{upperglobal}), for $n_{\min}>n_0$, each sample path should be \enquote{close} enough to its cluster center, i.e.,
\begin{equation}
\label{center}
\mathbb P\left(\max_{i \in\{ 1,\ldots,N\}} \min_{k \in\{ 1,\ldots,\kappa-1\}} \widehat{d^*} (\mathbf{x}_i, \mathbf{x}_{c_k})\le\frac{\delta_{\min}}{2}\right)<\delta,
\end{equation}
where the $\kappa$ cluster centers' indexes $c_1,\ldots,c_\kappa$ are determined by Algorithm \ref{algo::offline_known_k} in the following way:
$$
(c_1,c_2) := \argmax_{i,j \in \{ 1,\ldots,N\},~i<j}\widehat{d^*} (\mathbf{x}_i, \mathbf{x}_{j}),
$$
and
$$c_k :=\argmax_{i \in \{1,\ldots,N\}} \displaystyle\min_{j \in \{ 1,\ldots,k-1\}} \widehat{d^*} (\mathbf{x}_i, \mathbf{x}_{c_j}),~k = 3,\ldots,\kappa.
$$
These $c_1,\ldots, c_{\kappa}$ are chosen to index sample paths generated by different process covariance structures. Then by (\ref{upperglobal}), each remaining sample path will be assigned to the cluster center corresponding to the sample path generated by the same process covariance structure. Finally (\ref{cluster:x}) results from (\ref{upperglobal}) and (\ref{center}); and (\ref{converge:offline}) is proved by combining (\ref{converge:t}) and (\ref{cluster:x}).
\end{proof}
Below we state the consistency theorem concerning the online clustering algorithm.
\begin{thm}
\label{thm:online}
Under Assumption $(\mathcal A)$, Algorithm \ref{algo::online_known_k} is approximately asymptotically consistent for clustering the online sample paths $S(t)=\{\mathbf z_1^t,\ldots,\mathbf z_{N(t)}^t\}$, $t=1,2,\ldots$. This means: if $\widehat{d^*}$ is replaced with $\widetilde{d^*}$ in Algorithm \ref{algo::online_known_k}, for any integer $N\ge 1$, the output clusters of the first $N$ paths in $S(t)$
$$
S(t)|_N:=\left\{\mathbf z_1^t,\ldots,\mathbf z_N^t\right\},
$$
converge to the covariance structure ground truths of the increments of the corresponding tangent processes $S'(t)|_N:=\{\mathbf x_1^t,\ldots,\mathbf x_N^t\}$ in probability, as $\Delta t\to0$ and $t\to+\infty$. In other words,
\begin{equation}
\label{converge:online}
\lim_{t\to+\infty}\lim_{\Delta t\to0}\mathbb P\left(f(S(t),\kappa,\widetilde{d^*})|_N=G_{S'(t)}|_N\right)=1,
\end{equation}
where  $f(S(t),\kappa,\widetilde{d^*})|_N$
denotes the clustering $f(S(t),\kappa,\widetilde{d^*})$ restricted to the first $N$ sample paths in $S(t)$. We also  recall that $G_{S'(t)}|_N$ is the restriction of $G_{S'(t)}$ to the first $N$ sample paths $\{\mathbf x_1^t,\ldots,\mathbf x_N^t\}$ in $S'(t)$ (see Definition \ref{ground-truth}).
\end{thm}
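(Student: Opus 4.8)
The plan is to follow the two-stage pattern of the proof of Theorem~\ref{thm:offline}. Stage~1 disposes of the inner limit $\Delta t\to0$: for each fixed $t$ it reduces clustering $S(t)$ under $\widetilde{d^*}$ to clustering the tangent-process increments $S'(t)=\{\mathbf x_1^t,\ldots,\mathbf x_{N(t)}^t\}$ under $\widehat{d^*}$. Stage~2 then handles the outer limit $t\to+\infty$ by proving that the $\widehat{d^*}$-version of Algorithm~\ref{algo::online_known_k}, applied to $S'(t)$, is (ordinary) asymptotically consistent. Composing the two yields the iterated limit $(\ref{converge:online})$ in exactly the order $\lim_{t\to+\infty}\lim_{\Delta t\to0}$ that is asserted. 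Throughout one works with the natural requirements on the input weights, $\beta_j>0$ and $\sum_{j\ge1}\beta_j<+\infty$, which are what the online weighting scheme needs.

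For Stage~1, fix $t$ and observe that, once the finite pairwise array $\big(\widetilde{d^*}(\mathbf z_a^t,\mathbf z_b^t)\big)_{1\le a<b\le N(t)}$ is prescribed, every object Algorithm~\ref{algo::online_known_k} produces at time $t$ --- the subroutine outputs $\{C_1^j,\ldots,C_\kappa^j\}$, the ordered center indices $(c_1^j,\ldots,c_\kappa^j)$, the gaps $\gamma_j$, the normalizer $\eta$ and the assignment in Line~15 --- is a fixed deterministic function of that array away from a tie set cut out by finitely many equalities. As in the passage leading to $(\ref{converge:t})$, I would express $\{f(S(t),\kappa,\widetilde{d^*})|_N=G\}$, for a $\kappa$-partition $G$ of $\{1,\ldots,N\}$, as $\bigcup_{\varepsilon>0}$ of events given by finitely many strict inequalities among the entries of the array (inserting an $\varepsilon$-margin into the two farthest-point selections, the definitions of the $\gamma_j$, and the final minimisation of $k'\mapsto\frac1\eta\sum_{j}w_j\gamma_j\widehat{d^*}(\mathbf z_i^t,\mathbf z_{c_{k'}^j}^t)$). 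Proposition~\ref{prop_converge_delta}, i.e.\ the joint convergence in law $(\ref{converge:measure})$ of the dissimilarity array applied with $N$ replaced by $N(t)$, together with the continuous mapping theorem and the $\varepsilon$-sandwich taken as $\varepsilon\downarrow0$, then gives $\mathbb P\big(f(S(t),\kappa,\widetilde{d^*})|_N=G\big)\xrightarrow[\Delta t\to0]{}\mathbb P\big(f(S'(t),\kappa,\widehat{d^*})|_N=G\big)$, the online analogue of $(\ref{converge:t})$.

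For Stage~2, I would adapt the online consistency argument of \cite{khaleghi2016,PRZ19}. Put $\delta_{\min}$ as in $(\ref{delta_min})$, fix $\varepsilon\in(0,\delta_{\min}/4)$, and use Remark~\ref{remark1} --- so that $\widehat{d^*}(\mathbf x_a^t,\mathbf x_b^t)\to d(X^{(a)},X^{(b)})$ in probability as the involved path lengths grow --- and the triangle inequalities of Remark~\ref{remark3}. Let $m_0$ be the (finite) least index such that sequences $1,\ldots,m_0$ already realise all $\kappa$ distinct covariance structures. The crux is to split $\sum_{j=\kappa}^{N(t)}$ in Line~15 into three regimes and to choose a slowly growing threshold $M(t)\to+\infty$: (i) $j<m_0$, where the first $j$ sequences carry fewer than $\kappa$ structures, so the farthest-point rule of Algorithm~\ref{algo::offline_known_k} must place two of its $\kappa$ centers inside one structure and hence $\gamma_j\le\widehat{d^*}(\text{two same-structure paths})\to0$, making these finitely many terms negligible; (ii) the block $m_0\le j\le M(t)$, on which --- by a union bound over the $O(M(t))$ offline sub-clusterings and the estimate inside the proof of Theorem~\ref{thm:offline} --- all sub-clusterings are simultaneously correct with probability $\to1$, so $\gamma_j\ge\delta_{\min}/2$ and $c_k^j$ indexes a path of the $k$-th structure; (iii) the tail $M(t)<j\le N(t)$, controlled by the summability of $\{\beta_j\}$ (so $\sum_{j>M(t)}\beta_j\to0$) together with a uniform-in-$j$, with-probability-$\to1$ bound on the relevant $\widehat{d^*}$-values, exactly as handled in \cite{PRZ19}. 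These three facts force $\eta$ to lie between two positive constants with probability $\to1$; the $\widehat{d^*}$-weighted average distance from any $i\le N$ to the centers carrying $i$'s own structure is then $O(\varepsilon)$ plus a vanishing tail, while the same average to the centers carrying any other structure stays bounded below by $(\delta_{\min}/2)(\delta_{\min}-2\varepsilon)\sum_{m_0\le j\le M(t)}\beta_j\big/\eta$, a positive constant. Hence each of the first $N$ paths is assigned to its true cluster, so $\mathbb P\big(f(S'(t),\kappa,\widehat{d^*})|_N=G_{S'(t)}|_N\big)\to1$; if the first $N$ sequences realise only $\kappa'<\kappa$ structures one runs the same argument, using that $N(t)\to+\infty$ eventually exhibits all $\kappa$ structures so the spare labels never capture a first-$N$ path. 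Combining with Stage~1 proves $(\ref{converge:online})$.

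The main obstacle is Stage~2, and inside it the fact that the two sources of control act on complementary, individually uncontrollable ranges of the iteration index $j$: letting $t\to+\infty$ only makes a bounded (albeit growing) initial block $j\le M(t)$ of sub-clusterings reliable at once, whereas $N(t)\to+\infty$ keeps injecting fresh, short sequences at the large-$j$ end; the large-$j$ tail therefore has to be tamed not by path length but by the summability of $\{\beta_j\}$ plus the uniform high-probability bound on $\widehat{d^*}$, while the genuine defect at small $j$ is tamed for free by the automatic collapse $\gamma_j\to0$ whenever too few structures are present. A secondary point is to keep the limits in the order $\lim_{t\to+\infty}\lim_{\Delta t\to0}$, as stated, rather than as a joint limit, and to note that --- unlike the almost-sure statements of \cite{khaleghi2016,PRZ19} --- every convergence here is only in probability, since the ergodicity input in Remark~\ref{remark1} is itself only in probability.
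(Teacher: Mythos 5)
Your proposal is correct and follows essentially the same two-stage argument as the paper: first the $\Delta t\to 0$ reduction via Proposition \ref{prop_converge_delta} (the paper's step (\ref{converge:t_1})), then the $t\to+\infty$ consistency of the $\widehat{d^*}$-version of Algorithm \ref{algo::online_known_k}, obtained by splitting the weighted sum over $j$, using summability of the weights, the triangle inequalities of Remark \ref{remark3}, Remark \ref{remark1}, and the offline Theorem \ref{thm:offline}. Two internal choices differ, and are worth noting. For the middle/tail split you let a threshold $M(t)\to+\infty$ and ask that $O(M(t))$ offline sub-clusterings be simultaneously correct via a union bound; the paper instead fixes $J$ once and for all through the tail condition $\sum_{j>J}w_j\le\varepsilon$ (its (\ref{bound_wJ})), which removes the need for a diagonal choice of $M(t)$ and makes the uniform-in-$t$ bounds (\ref{d:upperbound})--(\ref{upper1}) immediate; your route works but costs an extra argument that the error probabilities summed over the growing family still vanish, which the fixed-$J$ device gets for free. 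For the initial block $j<m$ you invoke the Khaleghi--Ryabko observation that, with fewer than $\kappa$ structures present, two candidate centers must fall in one covariance structure, so $\gamma_j$ is small in probability and these finitely many terms are negligible; the paper instead bounds this block directly through $(\eta^t)^{-1}\sum_{j=1}^{m-1}w_j\gamma_j^t\le1$ together with the high-probability bound in (\ref{upper2})---your version is, if anything, the more careful one here, since it does not require the centers $\mathbf{x}_{c_k^j}^t$ with $j<m$ to be close to $X^{(k)}$. Everything else---the roles of $m$, $\delta_{\min}$, $\delta_{\max}$, the lower bound on $\eta^t$, the final comparison of the weighted average distances as in (\ref{lower:dhat})--(\ref{upper:dhat}), the order of the iterated limits, and the fact that all convergences are only in probability---matches the paper's proof.
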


\begin{proof}
Let's fix $N\ge1$. First, similar to the derivation of (\ref{converge:t}) in the proof of Theorem \ref{thm:offline}, we can obtain
\begin{equation}
\label{converge:t_1}
\mathbb P\left(f(S(t),\kappa,\widetilde{d^*})|_N=G_{S'(t)}|_N\right)\xrightarrow[\Delta t\to0]{}\mathbb P\left(f(S'(t),\kappa,\widehat{d^*})|_N=G_{S'(t)}|_N\right).
\end{equation}
Then it remains to  prove
\begin{equation}
\label{cluster:x_1}
\mathbb P\left(f(S'(t),\kappa,\widehat{d^*})|_N=G_{S'(t)}|_N\right)\xrightarrow[t\to+\infty]{}1.
\end{equation}
In what follows we prove (\ref{cluster:x_1}).

Let $\delta>0$ be arbitrarily small. Fix $\varepsilon \in (0, \delta_{\min}/4)$, where  $\delta_{\min}$ is defined as in (\ref{delta_min}).

Denote by
\begin{equation}
\label{deltamax:kk'}
\delta_{\max}:= \max\left\{d\left(X^{(k)},X^{(k')}\right):~k,k'\in\{1,\ldots,\kappa\}\right\}.
\end{equation}
For $k\in\{1,\ldots,\kappa\}$, denote by $s_{k}$ the index of the first path in $S'(t)$ sampled from $X^{(k)}$, i.e.,
\begin{align}
\label{sk}
s_{k} := \min \left \{ i \in G_k \cap \{1,\ldots,N(t)\} \right \}.
\end{align}
Note that $s_k$ does not depend on $t$ but only on $k$.
Then denote by
\begin{align}
\label{m_t}
m := \max_{k \in \{1,\ldots,\kappa\}} s_{k}.
\end{align}
For $j\ge1$ denote by $S'(t)|_j$ the first $j$ sample paths contained in $S(t)$. Then from (\ref{m_t}) we see that, $m\le N(t)$ and $S'(t)|_{m}$ contains paths sampled from all $\kappa$ distinct processes (covariance structures).
By using the fact that $\sum_{j = 1}^{+\infty} w_j<+\infty$, we can find a fixed value $J\ge m$ such that
\begin{equation}
\label{bound_wJ}
\sum_{j = J+1}^{+\infty} w_j\le\varepsilon.
\end{equation}
Recall that in the online setting, the $i$th sample path's length $n_i(t)$ grows with time $t$. Therefore, by Remark \ref{remark3}, for every $j \in  \{1,\ldots,J\}$ there exists some $T_1(j)>0$ such that
\begin{equation}
\label{d:upperbound}
 \sup_{t\ge T_1(j)}\mathbb P\left(\max_{\substack{k \in \{1,\ldots,\kappa\} \\ i \in G_k \cap \left \{ 1,\ldots,j \right \}} } \widehat{d^*}\left(\mathbf{x}_i^t, X^{(k)}\right) >\varepsilon\right)<\delta.
\end{equation}
Since $J\ge m$, by Theorem \ref{thm:offline} for every $j \in \{m,\ldots,J\}$ there exists $T_2(j)>0$ such that $\mbox{Alg1}(S'(t)|_j, \kappa,\widehat{d^*})$ is asymptotically consistent for all $t \geq T_2(j)$. Since $N(t)$ is increasing as $t\to+\infty$, there is $T_3>0$ such that $N(t)>J$ for $t\ge T_3$. Let
\begin{align*}
T:= \max\left\{\max_{\substack{i\in\{1,2\}\\ j \in \{1,\ldots,J\}}} T_i(j),~T_3\right\}.
\end{align*}
From Algorithm \ref{algo::online_known_k} (Lines 9, 11) we see
\begin{equation}
\label{eta_gamma}
\eta^t : = \sum_{j=1}^{N(t)}w_j\gamma_j^t,\quad\mbox{with}\quad
\gamma_j^t:=\min_{\substack{k,k'\in \{1,\ldots,\kappa\}\\ k\neq k'}} \widehat{d^*}\left(\mathbf{x}_{c_k^j}^t,\mathbf{x}_{c_{k'}^j}^t\right).
\end{equation}
Below we provide upper bounds in probability of $\eta^t$ and $\gamma_j^t$.
\begin{description}
 \item[Upper bound of $\gamma_j^t$:]
Similar to how (\ref{lowerbound}) is derived, we use the triangle inequalities (Remark \ref{remark3}) and (\ref{delta_min}) to obtain:
\begin{eqnarray}
\label{lowerbound2'}
&&\sup_{t\ge T}\mathbb P\left(\min_{j\in\{1,\ldots,N(t)\}}\gamma_j^t<\frac{\delta_{\min}}{2}\right)\nonumber\\
&&\le\sup_{t\ge T}\mathbb P\left(\min_{\substack{j\in\{1,\ldots,N(t)\}\\ k,k'\in\{1,\ldots,\kappa\}\\ k\neq k'}}\left(d\left(X^{(k)}, X^{(k')}\right) -2\widehat{d^*}\left(\mathbf{x}_{c_k^j}^t, X^{(k)}\right)\right)<\frac{\delta_{\min}}{2}\right)\nonumber\\
&&\le\sup_{t\ge T}\mathbb P\left(\max_{\substack{j\in\{1,\ldots,N(t)\}\\ k\in\{1,\ldots,\kappa\}}} \widehat{d^*}\left(\mathbf{x}_{c_k^{j}}^t, X^{(k)}  \right)>\frac{\delta_{\min}}{4}\right).
\end{eqnarray}
Since the clusters are ordered in the order of appearance of the distinct process covariance structures, we have $\mathbf{x}_{c_k^j}^t = \mathbf{x}_{s_{k}}^t$ for all $j\ge m$ and $k \in\{ 1,\ldots,\kappa\}$, where we recall that the index $s_{k}$ is defined in (\ref{sk}). It follows from (\ref{lowerbound2'}), the fact that $\varepsilon<\delta_{\min}/4$  and  (\ref{d:upperbound}) that
\begin{equation}
\label{lowerbound2}
\sup_{t\ge T}\mathbb P\left(\min_{j\in\{1,\ldots,N(t)\}}\gamma_j^t<\frac{\delta_{\min}}{2}\right)\le\sup_{t\ge T}\mathbb P\left(\max_{\substack{j\in\{1,\ldots,m\} \\ k\in\{1,\ldots,\kappa\}}} \widehat{d^*}\left(\mathbf{x}_{c_k^{j}}^t, X^{(k)}\right)>\varepsilon\right)< m\delta.
\end{equation}
For $j\in\{ 1,\ldots,N(t)\}$, by (\ref{eta_gamma}), the triangle inequality, (\ref{deltamax:kk'}) and (\ref{lowerbound2}), we have
\begin{eqnarray}
\label{def:M}
&&\sup_{t\ge T}\mathbb P\left(\max_{j\in\{1,\ldots,N(t)\}}\gamma_j^t> \delta_{\max} + 2\varepsilon\right)\nonumber\\
&&\le \sup_{t\ge T}\mathbb P\left(\max_{\substack{j\in\{1,\ldots,N(t)\} \\  k,k'\in\{1,\ldots,\kappa\}\\ k\neq k'}}\left(d\left(X^{(k)}, X^{(k')}\right) +2\widehat{d^*}\left(\mathbf{x}_{c_k^{j}}^t, X^{(k)}\right) \right)> \delta_{\max} + 2\varepsilon\right)\nonumber \\
&&\le \sup_{t\ge T}\mathbb P\left(\max_{\substack{ j\in\{1,\ldots,m\}\\  k\in\{1,\ldots,\kappa\}\\}}\widehat{d^*}\left(\mathbf{x}_{c_k^{j}}^t, X^{(k)} \right)> \varepsilon\right)\nonumber \\
&&<m\delta.
\end{eqnarray}
\item[Upper bound of $\eta^t$:]
By (\ref{lowerbound2}) and the fact that
$
\sum_{j=1}^{N(t)}w_j\ge w_m,
$
we have
\begin{eqnarray}
\label{etabound}
&&\sup_{t\ge T}\mathbb P\left(\eta^t < \frac{w_{m} \delta_{\min}}{2}\right)\le \sup_{t\ge T}\mathbb P\left(\min_{j\in\{1,\ldots,N(t)\}}\gamma_{j}^t\sum_{j=1}^{N(t)}w_j < \frac{w_{m} \delta_{\min}}{2}\right)\nonumber\\
&&\le \sup_{t\ge T}\mathbb P\left(\min_{j\in\{1,\ldots,N(t)\}}\gamma_{j}^t< \frac{\delta_{\min}}{2}\right)<m\delta.
\end{eqnarray}
\end{description}
Recall that $N(t)>J$ for $t\ge T$. Therefore for every $k \in \{1,\ldots,\kappa\}$ we can write
\begin{eqnarray}
\label{split}
&&\frac{1}{\eta^t}\sum_{j=1}^{N(t)} w_j \gamma_j^t\widehat{d^*}\left(\mathbf{x}_{c_k^j}^t ,X^{(k)} \right)=\frac{1}{\eta^t}\sum_{j=1}^{m-1} w_j \gamma_j^t\widehat{d^*}\left(\mathbf{x}_{c_k^j}^t ,X^{(k)} \right)\nonumber\\
&&\hspace{2cm}+\frac{1}{\eta^t}\sum_{j=m}^{J} w_j \gamma_j^t\widehat{d^*}\left(\mathbf{x}_{c_k^j}^t ,X^{(k)} \right)+\frac{1}{\eta^t}\sum_{j=J+1}^{N(t)} w_j \gamma_j^t\widehat{d^*}\left(\mathbf{x}_{c_k^j}^t ,X^{(k)} \right).
\end{eqnarray}
Now we provide upper bounds in probability of the 3 terms on the right-hand side of (\ref{split}).
\begin{description}
\item[Upper bound of the first term:] In view of (\ref{d:upperbound}) and the fact that
$
(\eta^t)^{-1}\sum_{j=1}^{m-1} w_j \gamma_j^t\le 1,
$
we get
\begin{eqnarray}
\label{upper2}
&&\sup_{t\ge T}\mathbb P\left(\frac{1}{\eta^t}\sum_{j=1}^{m-1} w_j \gamma_j^t\widehat{d^*}\left(\mathbf{x}_{c_k^j}^t ,X^{(k)} \right)>\varepsilon\right)\nonumber\\
&&\le \sup_{t\ge T}\mathbb P\left(\max_{\substack{j\in\{1,\ldots,m-1\}\\ k\in\{1,\ldots,\kappa\}}}\widehat{d^*} \left(\mathbf{x}_{c_k^j}^t ,X^{(k)} \right)> \varepsilon\right)\nonumber\\
&&\le (m-1)\delta.
\end{eqnarray}
\item[Upper bound of the second term:] Recall that $\mathbf{x}_{c_k^j}^t = \mathbf{x}_{s_{k}}^t$ for all $j \in \{m,\ldots,J\}$ and $k \in\{ 1,\ldots,\kappa\}$. Therefore, by (\ref{d:upperbound}) and the fact that
$
(\eta^t)^{-1}\sum_{j=m}^{J} w_j \gamma_j^t\le 1,
$
for every $k\in\{1,\ldots,\kappa\}$ we have
\begin{eqnarray}
\label{upper3}
&&\sup_{t\ge T}\mathbb P\left(\frac{1}{\eta^t} \displaystyle \sum_{j=m}^{J}w_j \gamma_j^t \widehat{d^*}\left(\mathbf{x}_{c_k^j}^t,X^{(k)}\right)>\varepsilon\right)\nonumber\\
&&=  \sup_{t\ge T}\mathbb P\left(\widehat{d^*} \left(\mathbf{x}_{s_{k}}^t ,X^{(k)}\right) \frac{1}{\eta^t}\sum_{j=m}^{J} w_j \gamma_j^t>\varepsilon\right)\nonumber\\
&&\le \sup_{t\ge T}\mathbb P\left(\widehat{d^*} \left(\mathbf{x}_{s_{k}}^t ,X^{(k)}\right)>\varepsilon\right)<\delta.
\end{eqnarray}
\item[Upper bound of the third term:] By (\ref{bound_wJ}), (\ref{etabound}) and (\ref{def:M}),
\begin{eqnarray}
\label{upper1}
&&\sup_{t\ge T}\mathbb P\left(\frac{1}{\eta^t}\sum_{j=J+1}^{N(t)} w_j \gamma_j^t\widehat{d^*}\left(\mathbf{x}_{c_k^j}^t ,X^{(k)} \right)>\frac{2\varepsilon^2(\delta_{\max}+2\varepsilon)}{w_{m}\delta_{\min}}\right)\nonumber\\
&&\le \sup_{t\ge T}\mathbb P\left(\max_{\substack{j\in\{1,\ldots,N(t)\}\\ k\in\{1,\ldots,\kappa\}}}\widehat{d^*} \left(\mathbf{x}_{c_k^j}^t ,X^{(k)} \right)> \varepsilon\right)\nonumber\\
&&\hspace{2cm}+ \sup_{t\ge T}\mathbb P\left(\max_{j\in\{1,\ldots,N(t)\}}\gamma_j^t> \delta_{\max}+2\varepsilon\right)\nonumber\\
&&<2m\delta.
\end{eqnarray}
\end{description}
Combining (\ref{split}), (\ref{upper2}), (\ref{upper3}) and  (\ref{upper1}) we obtain, for $k \in \{1,\ldots,\kappa\}$,
\begin{eqnarray}
\label{conv_proof}
&&\sup_{t\ge T}\mathbb P\left(\frac{1}{\eta^t} \displaystyle \sum_{j=1}^{N(t)} w_j \gamma_j^t \widehat{d^*}\left(\mathbf{x}_{c_k^j}^t ,X^{(k)}\right)> \varepsilon \left(2 + \frac{2\varepsilon^2(\delta_{\max}+2\varepsilon)}{w_{m}\delta_{\min}}\right)\right)<3m\delta.
\end{eqnarray}
Now we explain how to use (\ref{conv_proof}) to prove the asymptotic consistency of Algorithm \ref{algo::online_known_k}. Consider an index $i \in G_{k'}$ for some $k' \in \{1,\ldots,\kappa\}$. On one hand, using the triangle inequalities, we get for $k\in\{1,\ldots,\kappa\}$, $k\neq k'$,
\begin{eqnarray*}
&&\frac{1}{\eta^t} \sum_{j=1}^{N(t)} w_j \gamma_j^t \widehat{d^*}\left(\mathbf{x}_i^t,\mathbf{x}_{c_k^j}^t\right)\geq \frac{1}{\eta^t} \sum_{j=1}^{N(t)} w_j \gamma_j^t d^*\left(X^{(k)},X^{(k')}\right)\\
&&\hspace{2cm}- \left(\frac{1}{\eta^t}\sum_{j=1}^{N(t)} w_j \gamma_j^t\right)\widehat{d^*}\left(\mathbf{x}_i^t, X^{(k')} \right)- \frac{1}{\eta^t} \sum_{j=1}^{N(t)} w_j \gamma_j^t \widehat{d^*}\left(\mathbf{x}_{c_k^j}^t ,X^{(k)}\right)\\
&&\ge \delta_{\min}-\widehat{d^*}\left(\mathbf{x}_i^t, X^{(k')} \right)+\frac{1}{\eta^t}\sum_{j=1}^{N(t)} w_j \gamma_j^t\widehat{d^*}\left(\mathbf{x}_{c_k^j}^t ,X^{(k)}\right).
\end{eqnarray*}
Then applying (\ref{d:upperbound}) and (\ref{conv_proof}) we obtain
\begin{eqnarray}
\label{lower:dhat}
&&\sup_{t\ge T}\mathbb P\left(\frac{1}{\eta^t} \sum_{j=1}^{N(t)} w_j \gamma_j^t \widehat{d^*}\left(\mathbf{x}_i^t,\mathbf{x}_{c_k^j}^t\right)<\delta_{\min} - \varepsilon \left(3 + \frac{2\varepsilon^2(\delta_{\max}+2\varepsilon)}{w_{m}\delta_{\min}}\right)\right)\nonumber\\
&&\le \sup_{t\ge T}\mathbb P\left(\widehat{d^*}\left(\mathbf{x}_i^t, X^{(k')} \right)+\frac{1}{\eta^t}\sum_{j=1}^{N(t)} w_j \gamma_j^t\widehat{d^*}\left(\mathbf{x}_{c_k^j}^t ,X^{(k)}\right)>\varepsilon \left(3 + \frac{2\varepsilon^2(\delta_{\max}+2\varepsilon)}{w_{m}\delta_{\min}}\right)\right)\nonumber\\
&&\le \sup_{t\ge T}\mathbb P\left(\widehat{d^*}\left(\mathbf{x}_i^t, X^{(k')} \right)>\varepsilon\right)\nonumber\\
&&\hspace{2cm}+\sup_{t\ge T}\mathbb P\left(\frac{1}{\eta^t}\sum_{j=1}^{N(t)} w_j \gamma_j^t\widehat{d^*}\left(\mathbf{x}_{c_k^j}^t ,X^{(k)}\right)>\varepsilon \left(2 + \frac{2\varepsilon^2(\delta_{\max}+2\varepsilon)}{w_{m}\delta_{\min}}\right)\right)\nonumber\\
&&<(3m+1)\delta.
\end{eqnarray}
On the other hand, from (\ref{d:upperbound}) we see for any $N\ge1$,
\begin{align}
\label{left_bound}
 \sup_{t\ge T_1(N)}\mathbb P\left(\max_{\substack{k \in \{1,\ldots,\kappa\} \\ i \in G_k \cap \left \{ 1,\ldots,N \right \}} } \widehat{d^*}\left(\mathbf{x}_i^t, X^{(k)}\right) >\varepsilon\right)<\delta.
\end{align}
Using again the triangle inequalities, we get
\begin{eqnarray}
\label{ineq_triangle_upper}
&&\frac{1}{\eta^t} \sum_{j=1}^{N(t)} w_j \gamma_j^t \widehat{d^*}\left(\mathbf{x}_i^t,\mathbf{x}_{c_{k'}^j}^t\right)\nonumber\\
&&\le \left(\frac{1}{\eta^t}\sum_{j=1}^{N(t)} w_j \gamma_j^t\right)\widehat{d^*}\left(\mathbf{x}_i^t, X^{(k')} \right)+ \frac{1}{\eta^t} \sum_{j=1}^{N(t)} w_j \gamma_j^t \widehat{d^*}\left(\mathbf{x}_{c_{k'}^j}^t ,X^{(k')}\right)\nonumber\\
&&\le \widehat{d^*}\left(\mathbf{x}_i^t, X^{(k')} \right)+\frac{1}{\eta^t}\sum_{j=1}^{N(t)} w_j \gamma_j^t\widehat{d^*}\left(\mathbf{x}_{c_{k'}^j}^t ,X^{(k')}\right).
\end{eqnarray}
Let $T':=\max\{T,T_1(N)\}$. It results from (\ref{ineq_triangle_upper}), (\ref{left_bound}) and (\ref{conv_proof}) that
\begin{eqnarray}
\label{upper:dhat}
&&\sup_{t\ge T'}\mathbb P\left(\frac{1}{\eta^t} \sum_{j=1}^{N(t)} w_j \gamma_j^t \widehat{d^*}\left(\mathbf{x}_i^t,\mathbf{x}_{c_{k'}^j}^t\right)>\varepsilon \left(3 + \frac{2\varepsilon^2(\delta_{\max}+2\varepsilon)}{w_{m}\delta_{\min}}\right)\right)\nonumber\\
&&\le \sup_{t\ge T'}\mathbb P\left(\widehat{d^*}\left(\mathbf{x}_i^t, X^{(k')} \right)+\frac{1}{\eta^t}\sum_{j=1}^{N(t)} w_j \gamma_j^t\widehat{d^*}\left(\mathbf{x}_{c_{k'}^j}^t ,X^{(k')}\right)>\varepsilon \left(3 + \frac{2\varepsilon^2(\delta_{\max}+2\varepsilon)}{w_{m}\delta_{\min}}\right)\right)\nonumber\\
&&\le \sup_{t\ge T'}\mathbb P\left(\widehat{d^*}\left(\mathbf{x}_i^t, X^{(k')} \right)>\varepsilon\right)\nonumber\\
&&\hspace{2cm}+\sup_{t\ge T'}\mathbb P\left(\frac{1}{\eta^t}\sum_{j=1}^{N(t)} w_j \gamma_j^t\widehat{d^*}\left(\mathbf{x}_{c_{k'}^j}^t ,X^{(k')}\right)>\varepsilon \left(2 + \frac{2\varepsilon^2(\delta_{\max}+2\varepsilon)}{w_{m}\delta_{\min}}\right)\right)\nonumber\\
&&<(3m+1)\delta.
\end{eqnarray}
Since $\delta$ and $\varepsilon$ can be chosen arbitrarily small, it follows from (\ref{lower:dhat}) and (\ref{upper:dhat}) that
\begin{align}
\label{final:bound}
\mathbb P\left(\argmin_{k \in \{1,\ldots,\kappa\}} \frac{1}{\eta^t}  \sum_{j=1}^{N(t)} w_j \gamma_j \widehat{d^*}\left(\mathbf{x}_i^t ,\mathbf{x}_{c_k^j}^t\right) = k'\right)\xrightarrow[t\to+\infty]{}1,
\end{align}
for all $i\in\{1,\ldots,N\}$. (\ref{cluster:x_1}) as well as Theorem \ref{thm:online} is proved.
\end{proof}

\section{Tests on Simulated Data: Clustering Multifractional Brownian Motions} \label{sec::exper_results}
\subsection{Efficiency Improvement: $\log^*$-transformation}
In this section, we show performance of the proposed clustering approaches (Algorithm \ref{algo::offline_known_k}) and (Algorithm \ref{algo::online_known_k}) on clustering simulated multifractional Brownian motions (mBm). MBm is a paradigmatic example of locally asymptotically self-similar processes. Its tangent process is fractional Brownian motion (fBm), which is self-similar. Since the covariance structure of the fBm is nonlinearly dependent on its self-similarity index, we can then apply the so-called $\log^*$-transformation to the covariance matrices of its increments, in order to improve the efficiency of the clustering algorithms. More precisely, in our clustering algorithms, we replace in $\widehat{d^*}$ the coefficients of all the covariance matrices and their estimators with their $\log^*$-transformation, i.e., for $x\in\mathbb R$,
\begin{equation}
\label{log}
\log^*(x):=\left\{\begin{array}{ll}
\log x,&\mbox{if $x>0$};\\
-\log(-x),&\mbox{if $x<0$};\\
0,&\mbox{if $x=0$}.
\end{array}\right.
\end{equation}
By applying such transformation, the observations assigned to any two clusters by the covariance structure ground truths become well separated thus the clustering algorithms become more efficient. For more detail on this efficiency improvement approach we refer the readers to  Section 3 in \cite{PRZ19}.

\subsection{Simulation Methodology}
Recall that an mBm $\{W_{H(t)}(t)\}_{t\ge0}$ is a zero-mean continuous-time Gaussian process, which can be defined via its covariance function \citep{Benassi1997,ACLV00,Stoev2006}: for $s,t\ge0$,
\begin{eqnarray}
\label{eqn::mbm_cov}
& \mathbb Cov\left(W_{H(t)}(t), W_{H(s)}(s)\right) := D(H(t), H(s))  \notag \\
& \quad \quad \times\left( t^{H(t)+H(s)} + s^{H(t)+H(s)} - |t-s|^{H(t)+H(s)} \right),
\end{eqnarray}
where
\begin{equation*}
D(t,s) := \frac{\sqrt{\Gamma(2t+1)\Gamma(2s+1)\sin(\pi t)\sin(\pi s)}}{2\Gamma(t+s+1)\sin(\pi(t+s)/2)}.
\end{equation*}
It can be seen from \cite{Boufoussi2008} that the mBm is locally asymptotically self-similar satisfying Assumption $(\mathcal A)$. Its tangent process at $t$ is an fBm $\{B^{(H(t))}(u)\}_u$ with index $H(t)$:
\begin{equation}
\label{mBm:conv}
\left\{\frac{W_{H(t+\tau u)}(t+\tau u) - W_{H(t)}(t)}{\tau^{H(t)}}\right\}_{u}\xrightarrow[\tau\rightarrow0^{+}]{\mbox{f.d.d.}}C_{H(t)}\left\{ B^{(H(t))}(u)\right\}_{u},
\end{equation}
where $C_{H(t)}$ is a deterministic function only depending on $H(t)$.

We select Wood-Chan's simulation method \citep{WoodChan,Chan1998simulation} to simulate the mBm paths, and use the implementation (MATLAB) of Wood-Chan's method in \textit{FracLab} (version 2.2) by INRIA in our simulation study\footnote{\url{https://project.inria.fr/fraclab/download/overview/}.}. This method outputs independent sample paths of the following form:
\begin{equation}
\label{sample:mBm}
\left\{W_{H(i/n)}\left(\frac{i}{n}\right)\right\},~\mbox{for $i=0,1,\ldots,n$},
\end{equation}
where $n\ge1$ is an input parameter. Now we would select $w_j=1/(j^2(j+1)^2)$ so that $d$ (see (\ref{def:d*})) is a convergent series (well-defined). To show this choice is reasonable we consider the stochastic process
\begin{equation}
\label{sample:mBm1}
\left\{W_{H(i\Delta)}\left(i\Delta\right)\right\},~\mbox{for $i=0,1,\ldots$},
\end{equation}
where $\Delta>0$ is some given mesh.
For each $t_0=0,\Delta,2\Delta,\ldots$, the increments of the tangent process (see the right-hand side of (\ref{local_self_2}))
$$
\left\{C_{H(t_0)}n^{-H(t_0)} B^{(H(t_0))}\left(i\Delta\right)\right\}, ~\mbox{for $i=0,1,\ldots$}
$$
is given by: for $i=0,1,\ldots$,
$$
X^{(H(t_0))}\left(i\Delta\right)=C_{H(t_0)}n^{-H(t_0)}\left( B^{(H(t_0))}\left((i+1)\Delta\right)-B^{(H(t_0))}\left(i\Delta\right)\right).
$$
As increments of fBm, $X^{(H(t_0))}(\bullet)$ is autocovariance ergodic. Moreover for $i,j=0,1,\ldots$, we have, by using the definition of $\log^*$ (see (\ref{log})), the covariance function of fBm (see (\ref{self-similar-cov})) and the fact that $\sup_{s\ge0}H(s)\le 1$,
\begin{eqnarray}
\label{cov:fbm}
&&\log^*\left(\mathbb Cov\left(X^{H(t_0))}\left(i\Delta\right),X^{(H(t_0))}\left(j\Delta\right)\right)\right)\nonumber\\
&&=\log^*\left(\frac{C_{H(t_0)}^2\Delta^{2H(t_0)}}{2}\left(\left|i-j-1\right|^{2H(t_0)}+\left|i-j+1\right|^{2H(t_0)}-2\left|i-j\right|^{2H(t_0)}\right)\right)\nonumber\\
&&=\mathcal O\left(\log (|i-j|+1)\right),~\mbox{as $|i-j|\to+\infty$}.
\end{eqnarray}
From the definition of $d$ in (\ref{def:d*}) and (\ref{cov:fbm}) we can see that, by taking $w_j=1/(j^2(j+1)^2)$ and using (\ref{cov:fbm}), for any $t_0,t_0'=0,\Delta,2\Delta,\ldots$,
$$
d\left(X^{(H(t_0))},X^{(H(t_0'))}\right)=\mathcal O\left(\sum_{l,m=1}^{+\infty}\frac{\sum_{i,j=l}^{l+m-1}\log (|i-j|+1)}{l^2(l+1)^2m^2(m+1)^2}\right),
$$
where,
\begin{eqnarray*}
&&\sum_{l,m=1}^{+\infty}\frac{\sum_{i,j=l}^{l+m-1}\log (|i-j|+1)}{l^2(l+1)^2m^2(m+1)^2}= 2\sum_{l,m=1}^{+\infty}\frac{\sum_{k=1}^{m-1}(m-k)\log (k+1)}{l^2(l+1)^2m^2(m+1)^2}\nonumber\\
&&\le 2\sum_{l,m=1}^{+\infty}\frac{\log m}{l^2(l+1)^2(m+1)^2}<+\infty.
\end{eqnarray*}
Therefore we have shown that $w_j=1/(j^2(j+1)^2)$ leads to that $d$ is well-defined.

\subsection{Synthetic Datasets}
To construct a collection of the mBm paths with distinct functional indexes $H(\bullet)$, we set the function form of $H(\bullet)$ in each of the predetermined clusters. Two functional forms of $H(\bullet)$ are selected for synthetic data study:
\begin{itemize}
    \item \textbf{Case 1 (Monotonic function)}: The general form is taken to be
\begin{equation} \label{eqn::mono}
H(t) = 0.5 + h \cdot t/Q, \quad  t\in[0,Q],
\end{equation}
where $Q>0$ is a fixed integer and different values of $h$ correspond to difference clusters.
We then predetermine 5 clusters with various $h$'s to separate different clusters. In this study we set $Q=100$, $h_1 = -0.4,~h_2 = -0.2,~h_3 = 0,~h_4 = 0.2$ and $h_5 = 0.4$. The trajectories of the 5 functional forms of $H(\bullet)$ in different clusters are illustrated in the top graph of Figure \ref{fig::sim_results_mono}.

    \item \textbf{Case 2 (Periodic function)}: The general form is taken to be
\begin{equation}  \label{eqn::sin}
H(t) = 0.5 + h \cdot \sin(\pi t/Q), \quad  t\in[0,Q],
\end{equation}
where  different values of $h$ lead to different clusters. Specifically,  we take $Q=100$, $h_1 = 0.4,~h_2 = 0.2,~h_3 = 0,~h_4 = -0.2$ and $h_5 = -0.4$. The trajectories of the corresponding 5 functional forms of $H(\bullet)$ are illustrated in the top graph of Figure \ref{fig::sim_results_sin}.
\end{itemize}

We demonstrate the approximated asymptotic consistency of the proposed algorithms by conducting both offline and online clustering analysis. Denote the number of observed data points in each time series by $n(t)$, and denote the number of time series paths by $N(t)$.

Under offline setting, the number of observed paths does not depend on time $t$, owever the lengths do. In order to construct offline datasets, we perform the follows:
\begin{enumerate}
    \item For $i=1,\ldots,5$, simulate 20 mBm paths in group $i$ (corresponding to $h_i$), each path is with length of $305$. Then the total number of paths $N=100$. To be more explicit we denote by
    \begin{equation}
    \label{S:total}
    S:=\begin{pmatrix}
x_{1,1} & x_{1,2} & \cdots & x_{1,305} \\
x_{2,1} & x_{2,2} & \cdots & x_{2,305} \\
\vdots  & \vdots  & \ddots & \vdots  \\
x_{100,1} & x_{100,2} & \cdots & x_{100,305}
\end{pmatrix},
    \end{equation}
    where each row is an mBm discrete-time path. For $i=1,\ldots,5$, the data from the $i$th group are given as:
    \begin{equation}
    \label{S_i}
     S^{(i)}:=\begin{pmatrix}
x_{20(i-1)+1,1} & x_{20(i-1)+1,2} & \cdots & x_{20(i-1)+1,305} \\
\vdots  & \vdots  & \ddots & \vdots  \\
x_{20i,1} & x_{20i,2} & \cdots & x_{20i,305}
\end{pmatrix}.
    \end{equation}
    \item At each $t=1,\ldots,100$, we suppose to observe the first $n(t)=3t+5$ values of each path, i.e.,
    $$
    S_{\text{offline}}(t)=\begin{pmatrix}
x_{1,1} & x_{1,2} & \cdots & x_{1,3t+5} \\
x_{2,1} & x_{2,2} & \cdots & x_{2,3t+5} \\
\vdots  & \vdots  & \ddots & \vdots  \\
x_{100,1} & x_{100,2} & \cdots & x_{100,3t+5}
\end{pmatrix}.
    $$
\end{enumerate}

The online dataset does not require observed paths to be with equal length, and can be regarded as some extension of the offline case. Introducing the online dataset aims at mimicking the situation where new time series are observed as time goes. In our simulation study, we use the following way to construct online datasets:
\begin{enumerate}
    \item For $i=1,\ldots,5$, simulate 20 mBm paths in group $i$ (corresponding to $h_i$), each path is with length of $305$ (see (\ref{S:total}) and (\ref{S_i})).
    \item At each $t=1,\ldots,100$ and $i=1,\ldots,5$, we suppose to observe the following dataset in the $i$th group:
$$
 S^{(i)}_{\text{online}}(t)=\begin{pmatrix}
\tilde x_{1,1} & \tilde x_{1,2} & \cdots&\cdots&\cdots & \cdots &\tilde x_{1,n_1(t)} \\
\tilde x_{2,1} & \tilde x_{2,2} & \cdots&\cdots &\cdots & \tilde x_{1,n_2(t)} \\
\vdots  & \vdots  & \ddots & \vdots  \\
\tilde x_{N_i(t),1} & \tilde x_{N_i(t),2} & \cdots & \tilde x_{N_i(t),n_{N_i(t)}(t)}
\end{pmatrix},
$$
    where
    \begin{itemize}
    \item $\tilde x_{k,l}$'s are the $(k,l)$-coefficients in $S^{(i)}$ given in (\ref{S_i}).
        \item $N_i(t):=6 + \lfloor (t-1)/10 \rfloor$ denotes the number of paths in the $i$th group. Here $\lfloor\bullet\rfloor$ denotes the floor number. That is, starting from 6 paths in each group, $1$ new path will be added into each group as $t$ increases by $10$.
        \item $n_l(t): = 3\left(t-(l-6)^+\right)^+ + 5$, with $(\bullet)^+ := \max(\bullet,0)$. This means each path observes 3 new values as $t$ increases by 1.
    \end{itemize}
\end{enumerate}
Since at each time $t$, the covariance structure ground truth being known, we can then evaluate the clustering performance in terms of the so-called misclassification rates \citep{PRZ19}. Heuristically speaking, the misclassification rate is then calculated by averaging the proportion of mis-clustered paths in each scenario.

\subsection{Experimental Results}
We demonstrate the asymptotic consistency of our clustering algorithms by computing the misclassification rates using simulated offline and online datasets. More details about such misclassification rate are provided in Section 4 of \cite{PRZ19}.

Below we summarize the simulation study results.
\begin{description}
\item[Case 1  (Monotonic function):]
\end{description}
When $H(\bullet)$'s are chosen to be 5 monotonic functionals of the form \eqref{eqn::mono} (see the top graph in Figure \ref{fig::sim_results_mono}), the bottom graph in Figure \ref{fig::sim_results_mono} illustrates the behavior of the misclassification rates corresponding to Algorithm \ref{algo::offline_known_k} applied to offline data setting (solid line), and Algorithm \ref{algo::online_known_k} applied to online data setting (dashed line). From this result we observe the following:
\begin{description}
\item[(1)] Both algorithms attempt to be consistent in their circumstances, as the time $t$ increases, in the sense that the corresponding misclassification rates are decreasing to $0$.
\item[(2)] Clustering mBms are asymptotically equivalent to clustering their tangent processes' increments.
\item[(3)] The online algorithm seems to have an overall better performance: its misclassification rates are $5\%$-$10\%$ lower than that of offline algorithm. The reason may be that at early time steps the differences among the $H(\bullet)$'s are not significantly. Unlike the offline clustering algorithm, the online one is flexible enough to catch these small differences.
\end{description}
\begin{figure}[h]
\centering
\includegraphics[scale = 0.8]{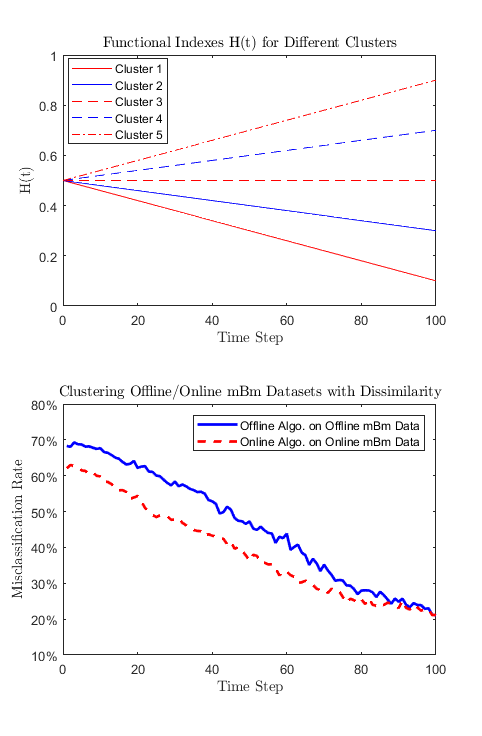}
\caption{The functional form of $H(\bullet)$ follows Eq. \eqref{eqn::mono}: $H(t) = 0.5 + h_i \cdot t/100$ with $t=0,1,\ldots,100$. The top graph plots $H(\bullet)$ corresponding to $5$ different clusters. The bottom graph illustrates the misclassification rates output by $(i)$ offline algorithm on offline dataset (solid line) and $(ii)$ online algorithm on online dataset (dashed line). Both algorithms are performed based on the $\log^*$- transformed covariance-based dissimilarity measure.}
\label{fig::sim_results_mono}
%\label{fig::online_known_kappa}
\end{figure}
\begin{description}
\item[Case 2  (Periodic function):]
\end{description}
The same converging behaviors are found in case of periodic functional form of $H(\bullet)$ as specified in \eqref{eqn::sin}. Their trajectories are illustrated in the top graph of Figure \ref{fig::sim_results_sin}. The clustering performance shown in the bottom graph of Figure \ref{fig::sim_results_sin} indicate the following:
\begin{description}
\item[(1)] Both misclassification rates of the clustering algorithms have generally a declining trend as time increases.
\item[(2)] As the differences among the periodic function $H(\bullet)$'s values go up and down, the misclassification rates go down and up accordingly.
\item[(3)] The online clustering algorithm has an overall worse performance than the offline one. This may be because starting from $t=20$ the differences among $H(\bullet)$'s become significantly large. In this situation offline clustering algorithm can better catch these differences, since it has larger sample size ($20$ paths in each group) than the online one.
\end{description}
Finally note that in the simulation study, for each pair of paths with length $n(t)$, we have taken $K=n(t)-2$ and $L=1$ in $\widehat{d^*}$, however any other value of $K$ could be taken. We have provided easily readable and editable MATLAB codes of the proposed algorithms and simulation study replications. All the codes used in this section can be found publicly online\footnote{\url{https://github.com/researchcoding/clustering_locally_asymptotically_self_similar_processes/}.}.
\begin{figure}[h]
\centering
\includegraphics[scale = 0.8]{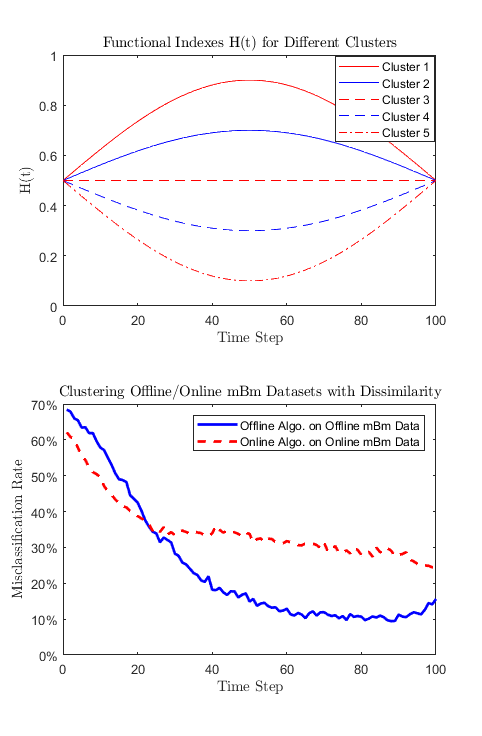}
\caption{The functional form of $H(\bullet)$ follows Eq. \eqref{eqn::sin}: $H(t) = 0.5 + h_i \cdot \sin(\pi t/100)$ with $t=0,1,\ldots,100$. The top graph plots $H(\bullet)$ corresponding to $5$ different clusters. The bottom graph illustrates the misclassification rates output by $(i)$ offline algorithm on offline dataset (solid line) and $(ii)$ online algorithm on online dataset (dashed line). Both algorithms are performed based on the $\log^*$- transformed covariance-based dissimilarity measure.}
\label{fig::sim_results_sin}
%\label{fig::online_known_kappa}
\end{figure}

\section{Real World Application: Clustering Global Financial Markets}
\label{sec:real_world}
\subsection{Motivation}
In this section, we motivate the application of the proposed clustering algorithms to real world datasets through performing cluster analysis on global equity markets. In the past, stock returns of countries in the same region are commonly believed to have more similar patterns and higher correlation. The reason is obvious: economic entities within closer geographical distance have potentially more trades and thus are influenced by similar economic factors. However, more recent empirical evidences of financial markets reveal that globalization is breaking the geographical barrier and is creating common economic factors. As a result, global economic clusters switch from ``geographical centriods'' to ``emerging/developed economics centriods''. That is, emerging markets demonstrate more and more similar financial market patterns and correlations \citep{DOYY18}, whereas developed economic entities share increasingly financial market similarity \citep{AL13}.

The idea of modeling stock returns by locally asymptotically self-similar processes (mBm) is pioneered by \cite{BP08}. This time-varying self-similar feature of the financial markets stochastic processes is further convinced by \cite{BPP13} and \cite{PZ18}. We consider data from global financial markets to be perfect underlying stochastic processes of our proposed clustering algorithms. We further examine the connection of global financial markets by answering whether economic entities are better co-behaved and clustered by geographical distribution or by development level.

\subsection{Data and Methodology}
Two asset classes from global financial markets are used in our empirical cluster analysis. The \textit{equity index return} captures the upside (growth) characteristics and the \textit{sovereign CDS spread} captures the downside (credit risk) characteristics of underlying economic entities:

\begin{itemize}
    \item \textbf{Equity indexes returns}: We cluster the global equity indexes according to the empirical time-varying covariance structure of their performance, using Algorithms \ref{algo::offline_known_k} and \ref{algo::online_known_k} as purposed in this paper. The index constituents of MSCI ACWI (All Country World Index), selected as underlying stochastic processes. Each of the indexes is a realized path representing the historical monthly total returns (with dividends) of underlying economic entities. MSCI ACWI is the leading global equity market index and covers about 85\% of the market capitalization in each market\footnote{As of December 2018, as reported on \url{https://www.msci.com/acwi}.}.

    \item \textbf{Sovereign CDS spreads}: We cluster the sovereign credit default swap (CDS) spreads of the same economic entities. The sovereign CDS is an insurance-like product that provides default protection on bonds and other debts issued by government, nation or large economic entity. Its spread reflects the cost to insurer the exposure to the possibility of a sovereign defaulting or restructuring. We select 5-year sovereign CDS spread as the indicator of sovereign credit risk, and 5-year products usually have the best liquidity on the market. The same economic entities as in equity index analysis are selected. Our data source is Bloomberg and Markit.
\end{itemize}

Through empirical study it is proved that these indexes returns exhibit the \enquote{long memory} path feature hence they can be modeled by self-similar processes or more generally by locally asymptotically self-similar processes such as fBms and mBms (see e.g. \cite{BP08} \cite{BPP13} and \cite{PZ18}). Therefore similar to Section \ref{sec::exper_results} we may cluster the increments of the indexes returns with the $\log^*$-transformed dissimilarity measure. We select equity indexes and sovereign CDS spreads covering 23 developed economic entities and 24 emerging markets. The detailed constituents can found at Table \ref{table::msci_data} or \url{https://www.msci.com/acwi}: Americas, EMEA (Europe, Middle East and Africa), Pacific and Asia.

\begin{sidewaystable}[htbp]
\centering
\caption{The categories of major equity and sovereign CDS markets in the MSCI ACWI (All Country World Index). There are 23 markets from developed economic entities, and 24 markets from emerging countries or areas. The geographical clustering contains Americas, EMEA (Europe, Middle East and Africa), Pacific and Asia. Markets with * are missing sovereign CDS data. \label{table::msci_data}}
\vspace{0.5em}
\begin{tabular}{ccccccc}
 \hline\hline
 \multicolumn{3}{c}{Developed Markets} & & \multicolumn{3}{c}{Emerging Markets} \\
 \cmidrule(r){1-3} \cmidrule(r){5-7}
Americas	&	Europe \& Middle East	&	Pacific	& &	Americas	&	Europe \& Middle East \& Africa	&	Asia	\\
 \cmidrule(r){1-3} \cmidrule(r){5-7}
Canada	&	Austria	&	Australia &	&	Brazil	&	Czech Republic	&	China (Mainland)	\\
USA	&	Belgium	&	 Hong Kong	& &	Chile	&	Greece*	&	India	\\
	&	Denmark	& Japan	&	& Colombia	&	Hungary	&	Indonesia	\\
	&	Finland	&	New Zealand &	&	Mexico	&	Poland	&	Korea	\\
	&	France	&	Singapore* &	&	Peru	&	Russia	&	Malaysia	\\
	&	Germany	&		& &		&	Turkey	&	Pakistan	\\
	&	Ireland	&		& &		&	Egypt	&	Philippines	\\
	&	Israel	&		& &		&	South Africa	&	Taiwan	\\
	&	Italy	&		& &		&	Qatar*	&	Thailand	\\
	&	Netherlands*	&		& &		&	United Arab Emirates*	&		\\
	&	Norway	&		& &		&		&		\\
	&	Portugal	&		& &		&		&		\\
	&	Spain	&		& &		&		&		\\
	&	Sweden	&		& &		&		&		\\
	&	Switzerland	&		& &		&		&		\\
	&	United Kingdom	&		& &		&		&		\\
 \hline \hline
\end{tabular}
\begin{flushleft}
\scriptsize\textbf{Source:} {MSCI ACWI (All Country World Index) market allocation. \url{https://www.msci.com/acwi}.}\\
\end{flushleft}
\end{sidewaystable}

We construct both offline and online datasets for monthly returns. For monthly return of global equity indexes, offline dataset starts from June 2005 and includes the financial crisis period in 2007 and 2008 and ends on November 2019. We include the global stock market crisis period to incorporate potential credit risk contagion effect on our clustering analysis. The online dataset starts on January 1989, which covers 1997 Asian financial crisis, 2003 dot-com bubble and 2007 subprime mortgage crisis, and ends on November 2019. In offline setting, each stochastic path in the dataset has 174 time series observations, and there are 47 paths to be clustered. In online setting, the longest time series have 371 observations and shortest time series have 174 observations. The online dataset begins with 33 economic entities and ends with 47 paths.

For monthly average on sovereign CDS spreads, we remove Netherlands, Qatar, Singapore, Greece and United Arabic from the economic entity samples due to insufficient observation. Therefore, we end up with 42 economic entities with CDS spread clustering analysis. The offline dateset starts from June 2005 and ends on November 2019, and each economic entity has 174 observations.

\subsection{Clustering Results}
We compare the clustering outcomes of both offline and online datasets with separations suggested by region (4 groups: Americas, Europe \& Middle East, Pacific and Asia) and development level (2 groups: emerging markets and developed markets). The clustering factor (region or development level) that has lower misclassification rate contributes the partition of the economics entities the most. That is, we examine whether region or development level differentiates the financial markets behaviors.

Table \ref{table::emp_results} shows that the misclassification rates for development levels are significantly and consistently lower than that of geographical region, for offline and online settings and for both equity and credit markets. The clustering comparison seems to convince that development level dominates the financial market characteristics over geographical distance for those underlying economic entities. The best results are clustering offline dataset using offline algorithm and clustering online dataset using online algorithm on dividing emerging markets and developed markets. There are 30\% to 50\% decrease on misclassification rates when clustering via development level than that via region.

\begin{table}[htbp]
\centering
\caption{The misclassification rates of clustering algorithms on datasets, comparing to clusters suggested by geographical region and development levels. Panel A presents the results from clustering equity indexes, and Panel B presents the results from clustering sovereign CDS spreads. \label{table::emp_results}}
\vspace{0.2em}
\begin{tabular}{ccccc}
 \hline\hline
\textbf{Panel A} & \multicolumn{2}{c}{Offline Algorithm} & \multicolumn{2}{c}{Online Algorithm} \\
 \cmidrule(r){2-3} \cmidrule(r){4-5}
Stock Returns	&	Regions	&	Emerging/Developed	&	Regions	&	Emerging/Developed	\\
\cmidrule(r){2-3} \cmidrule(r){4-5}
offline dataset	&	61.70\%	&	29.79\%	&	55.32\%	&	36.17\%	\\
online dataset	&	53.19\%	&	44.68\%	&	51.06\%	&	14.89\%	\\
 \hline \hline
\end{tabular}

\vspace{1cm}

\begin{tabular}{ccccc}
 \hline\hline
\textbf{Panel B} & \multicolumn{2}{c}{Offline Algorithm} & \multicolumn{2}{c}{Online Algorithm} \\
 \cmidrule(r){2-3} \cmidrule(r){4-5}
CDS Spreads	&	Regions	&	Emerging/Developed	&	Regions	&	Emerging/Developed	\\
\cmidrule(r){2-3} \cmidrule(r){4-5}
offline dataset	&	64.29\%	&	28.57\%	&	71.43\%	&	26.19\%	\\
online dataset	&	54.76\%	&	47.62\%	&	59.52\%	&	26.19\%	\\
 \hline \hline
\end{tabular}
\end{table}

Table \ref{table::emp_clu_results} presents misclassified economic entities from the cluster analysis on grouping emerging markets and developed markets. For equity indexes dataset, more misclassification concentrates on clustering developed economic entities into emerging market group. Stock index returns from Austria, Finland and Portugal markets are clustered as from emerging markets by both offline and online algorithms. The misclassification within developed group is rather random. For sovereign CDS spreads, more misclassification is on clustering emerging economic entities into developed market group. Sovereign CDS spreads of Chile, China (Mainland), Czech, Korea, Malaysia, Mexico, Poland and Thailand are consistently misclassified into developed markets, probably due to their low sovereign credit risk.

\begin{table}[htbp]
\centering
\caption{The misclassification outcome using offline algorithm on offline dataset and online algorithm on online dataset. Panel A reports the incorrectly categorized economics entities from equity index return clustering, and Panel B reports the incorrectly categorized economics entities from sovereign CDS spreads clustering. The algorithm divides the whole dataset into two groups: emerging market and developed markets, respectively. The incorrect outcome, where (i) entities from developed markets incorrectly clusters in emerging market, or (ii) vice versa, are reported in the table. \label{table::emp_clu_results}}
\vspace{0.2em}
Pabel A: Equity Indexes Returns
\begin{tabular}{cc|cc}
 \hline\hline
 \multicolumn{2}{c}{Group 1 (Emerging Markets)} & \multicolumn{2}{c}{Group 2 (Developed Markets)} \\
 \cmidrule(r){1-2} \cmidrule(r){3-4}
Incorrect - Offline	&	Incorrect - Online	&	Incorrect - Offline	&	Incorrect - Online	\\ \hline
Austria & Austria & Korea & Czech Republic \\
Finland &  Finland &  Chile & Qatar  \\
Germany &  Portugal & Philippines &  Peru  \\
Ireland &  &  Malaysia & South Africa \\
Italy & &  Mexico &  \\
Norway & &  & \\
Portugal & &  & \\
Spain & &  & \\
New Zealand & & & \\
 \hline \hline
\end{tabular}

\vspace{1cm}
Pabel B: Sovereign CDS Spreads
\begin{tabular}{cc|cc}
 \hline\hline
 \multicolumn{2}{c}{Group 1 (Emerging Markets)} & \multicolumn{2}{c}{Group 2 (Developed Markets)} \\
 \cmidrule(r){1-2} \cmidrule(r){3-4}
Incorrect - Offline	&	Incorrect - Online	&	Incorrect - Offline	&	Incorrect - Online	\\ \hline
Ireland & Ireland & Chile & Chile \\
Italy  & Portugal & China (Mainland)  & China (Mainland)  \\
Portugal & & Czech Republic &  Czech Republic   \\
Spain & & Korea & Hungary \\
&  & Malaysia &  Korea \\
 & & Mexico &  Malaysia \\
 & & Poland & Mexico\\
 & & Thailand & Poland \\
 & & & Thailand \\
 \hline \hline
\end{tabular}
\end{table}

From both equity and credit markets, we show that clustering financial time series using development level outperforms the clustering outcome using region. This empirical result further supports that economic globalization is breaking the geographic barrier and enhances the comovement of financial market behaviors by economics strengthens and status.

\section{Conclusions and Future Prospects} \label{conclusion}
We introduce the problem of clustering locally asymptotically self-similar processes. A new covariance-based dissimilarity measure is proposed to obtain approximately asymptotically consistent clustering algorithms for both offline and online settings. We have shown that the recommended algorithms are competitive for at least three reasons:
\begin{description}
\item[(1)] Given their flexibility, our algorithms are applicable to clustering any distribution stationary ergodic processes with finite variances; any autocovariance ergodic processes; locally asymptotically self-similar processes whose tangent processes have autocovariance ergodic increments. The multifractional Brownian motion (mBm) is an excellent example of the latter process.
\item[(2)] Our algorithms are efficient enough in terms of their computational complexity. Simulation study is performed on clustering mBm. The results show that both offline and online algorithms are approximately asymptotically consistent.
\item[(3)] Our algorithms are successfully applied to cluster the real world financial time series (equity returns and sovereign CDS spreads) via development level and via regions. The outcomes are self-consistent with the financial markets behavior.
\end{description}
Finally we list the following open problems which could be left for future research.
\begin{description}
\item[(1)] The clustering framework proposed in our paper only focuses on the cases where the true number of clusters $\kappa$ is known. The problem for which $\kappa$ is supposed to be unknown remains open.
\item[(2)] If we drop the Gaussianity assumption the class of stationary increments self-similar
processes becomes much larger. This will yield introduction to a more general class of locally asymptotically self-similar processes, whose autocovariances do not exist. This class includes linear multifractional stable motion \citep{Stoev2004stochastic, Stoev2005path} as a paradigmatic example. Cluster analysis on such stable processes will no-doubt lead to a wide range of applications, especially when the process distributions exhibit heavy-tailed phenomena. Neither the distribution dissimilarity measure introduced in \cite{khaleghi2016} nor the covariance-based dissimilarity measures used in this paper would work in this case, hence new techniques are required to cluster such processes.
\end{description}

\bibliographystyle{apalike}
\bibliography{ml.bib}

\end{document}